\newtheorem{principle}{Principle}
\newtheorem{lemma}{Lemma}
\newcommand{\citet}[1]{\citeauthor{#1}~\shortcite{#1}}
\newcommand{\citep}{\cite}
\title{Improved Knowledge Distillation via Teacher Assistant}
\newcommand*\samethanks[1][\value{footnote}]{\footnotemark[#1]}
\author{Seyed Iman Mirzadeh\thanks{Equal Contribution} \textsuperscript{\rm 1},
Mehrdad Farajtabar\samethanks  \textsuperscript{\rm 2},
Ang Li \textsuperscript{\rm 2},\\ \Large \textbf{Nir Levine \textsuperscript{\rm 2}, Akihiro Matsukawa\thanks{Work Done at DeepMind} \textsuperscript{\rm 3}, Hassan Ghasemzadeh \textsuperscript{\rm 1}}\\
\textsuperscript{\rm 1} Washington State University, WA, USA  \\
\textsuperscript{\rm 2} DeepMind, CA, USA  \\
\textsuperscript{\rm 3} D.E. Shaw, NY, USA\\
\textsuperscript{\rm 1} \{seyediman.mirzadeh,hassan.ghasemzadeh\}@wsu.edu\\
\textsuperscript{\rm 2} \{farajtabar,anglili,nirlevine\}@google.com\\
\textsuperscript{\rm 3} akihiro.matsukawa@gmail.com\\
}
\begin{document}
\maketitle

\begin{abstract}
Despite the fact that deep neural networks are powerful models and achieve appealing results on many tasks, they are too large to be deployed on edge devices like smartphones or embedded sensor nodes. 
There have been efforts to compress these networks, and a popular method is knowledge distillation, where a large (teacher) pre-trained network is used to train a smaller (student) network. However, in this paper, we show that the student network performance degrades when the gap between student and teacher is large. Given a fixed student network, one cannot employ an arbitrarily large teacher, or in other words, a teacher can effectively transfer its knowledge to students up to a certain size, not smaller. 
To alleviate this shortcoming, we introduce multi-step knowledge distillation, which employs an intermediate-sized network (teacher assistant) to bridge the gap between the student and the teacher. 
Moreover, we study the effect of teacher assistant size and extend the framework to multi-step distillation.
Theoretical analysis and extensive experiments on CIFAR-10,100 and ImageNet datasets and on CNN and ResNet architectures substantiate the effectiveness of our proposed approach.
\end{abstract}

\section{Introduction}
Deep neural networks have achieved state of the art results in a variety of applications such as computer vision~\citep{Huang2017DenselyCC,Hu2018SqueezeandExcitationN}, speech recognition~\citep{Han2017TheC2} and natural language processing~\citep{Devlin2018BERTPO}. Although it is established that introducing more layers and more parameters often improves the accuracy of a model, big models are computationally too expensive to be deployed on devices with limited computation power such as mobile phones and embedded sensors.
Model compression techniques have emerged to address such issues, \textit{e.g.}, parameter pruning and sharing~\citep{han2015compression}, low-rank factorization~\citep{Tai2015ConvolutionalNN} and knowledge distillation~\citep{bucilua2006model,kd}. Among these approaches, knowledge distillation has proven a promising way to obtain a small model that retains the accuracy of a large one. It works by adding a term to the usual classification loss that encourages the student to mimic the teacher's behavior.

However, we argue that knowledge distillation is not always effective, especially when the gap (in size) between teacher and student is large. To illustrate, we ran experiments that show surprisingly a student model distilled from a teacher with more parameters(and better accuracy) performs worse than the same one distilled from a smaller teacher with a smaller capacity. Such scenarios seem to impact the efficacy of knowledge distillation where one is given a small student network and a pre-trained large one as a teacher, both fixed and (wrongly) presumed to form a perfect transfer pair.

\begin{figure}[t]
\centering
\includegraphics[width=0.75\linewidth]{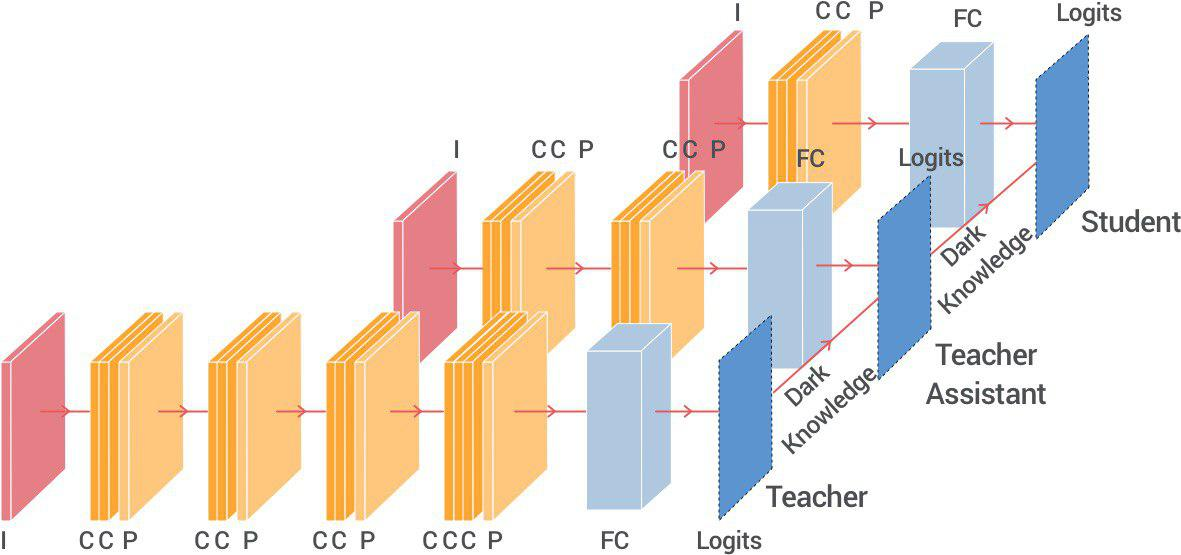}
\caption{TA fills the gap between student $\&$ teacher}
\label{fig:intro} 
\end{figure}

Inspired by this observation, we propose a new distillation framework called Teacher Assistant Knowledge Distillation (TAKD), which introduces intermediate models as teacher assistants (TAs) between the teacher and the student to fill in their gap (Figure~\ref{fig:intro}). TA models are distilled from the teacher, and the student is then only distilled from the TAs.

Our contributions are: (1) We show that the size (capacity) gap between teacher and student is important. To the best of our knowledge, we are the first to study this gap and verify that the distillation performance is not at its top with the largest teacher; (2) We propose a teacher assistant based knowledge distillation approach to improve the accuracy of student network in the case of extreme compression; (3) We extend this framework to include a chain of multiple TAs from teacher to student to further improve the knowledge transfer and provided some insights to find the best one; (4) Through extensive empirical evaluations and a theoretical justification, we show that introducing intermediary TA networks improves the distillation performance.

\section{Related Work} \label{sec:relatedwork}
We discuss in this section related literature in  knowledge distillation and neural network compression.

\textbf{Model Compression.} Since our goal is to train a small, yet accurate network, this work is related to model compression. There has been an interesting line of research that compresses a large network by reducing the connections based on weight magnitudes \citep{han2015compression,Li2016PruningFF} or importance scores \citep{Yu_2018_CVPR}.
The reduced network is fine-tuned on the same dataset to retain its accuracy. Another line of research focuses on distilling the original (large) network to a smaller network \citep{polino2018model,wang-aaai19}, in which case the smaller network is more flexible in its architecture design does not have to be a sub-graph of the original network.

\textbf{Knowledge Distillation.} Originally proposed by~\citet{bucilua2006model} and popularized by~\citet{kd}  knowledge distillation compress the knowledge of a large and computational expensive model (often an ensemble of neural networks) to a single computational efficient neural network. The idea of knowledge distillation is to train the small model, the student, on a transfer set with soft targets provided by the large model, the teacher. Since then, knowledge distillation has been widely adopted in a variety of learning tasks \citep{kd-gift,Yu_2017_ICCV,kickstart-rl,chen2017learning}. Adversarial methods also have been utilized for modeling knowledge transfer between teacher and student~\citep{heo2018improving,xu2018training,wang2018kdgan,Wang2018AdversarialLO}.

There have been works studying variants of model distillation that involve multiple networks learning at the same time. \citet{romero2014fitnets} proposed to transfer the knowledge using not only the logit layer but earlier ones too. To cope with the difference in width, they suggested a regressor to connect teacher and student's intermediate layers. Unfortunately, there is not a principled way to do this. To solve this issue, \citet{kd-gift,Yu_2017_ICCV} used a shared representation of layers, however, it's not straightforward to choose the appropriate layer to be matched.
\citet{czarnecki2017sobolev} minimized the difference between teacher and student derivatives of the loss combined with the divergence from teacher predictions while \citet{Tarvainen2017MeanTA} uses averaging model weights instead of target predictions.
\citet{urban2016deep} trained a network consisting of an ensemble of 16 convolutional neural networks and compresses the learned function into shallow multilayer perceptrons.
To improve the student performance, \citet{sau2016deep} injected noise into teacher logits to make the student more robust. Utilizing multiple teachers were always a way to increase robustness.
\citet{deep-mutual} proposed deep mutual learning which allows an ensemble of student models to learn collaboratively and teach each other during training. KL divergences between pairs of students are added into the loss function to enforce the knowledge transfer among peers. 
\citet{you2017learning}~proposed a voting strategy to unify multiple relative dissimilarity information provided by multiple teacher networks.
\citet{codistillation} introduced an efficient distributed online distillation framework called co-distillation and argue that distillation can even work when the teacher and student are made by the same network architecture. The idea is to train multiple models in parallel and use distillation loss when they are not converged, in which case the model training is faster and the model quality is also improved.

However, the effectiveness of distilling a large model to a small model has not yet been well studied. Our work differs from existing approaches in that we study how to improve the student performance given fixed student and teacher network sizes, and introduces intermediate networks with a moderate capacity to improve distillation performance.
Moreover, our work can be seen as a complement that can be combined with them and improve their performance.

\textbf{Distillation Theory.} Despite its huge popularity, there are few systematic and theoretical studies on how and why knowledge distillation improves neural network training.
The so-called \emph{dark knowledge} transferred in the process helps the student learn the finer structure of teacher network.

\citet{kd} argues that the success of knowledge distillation is attributed to the logit distribution of the incorrect outputs, which provides information on the similarity between output categories. \citet{furlanello2018born} investigated the success of knowledge distillation via gradients of the loss where the soft-target part acts as an importance sampling weight based on the teachers confidence in its maximum value. 
\citet{deep-mutual} analyzed knowledge distillation from the posterior entropy viewpoint claiming that soft-targets bring robustness by regularizing a much more informed choice of alternatives than blind entropy regularization. Last but not least, \citet{lopez2015unifying} studied the effectiveness of knowledge distillation from the perspective of learning theory~\citep{vapnik1998statistical} by studying the estimation error in empirical risk minimization framework.

In this paper, we take this last approach to support our claim on the effectiveness of introducing an intermediate network between student and teacher. Moreover, we empirically analyze it via visualizing the loss function.

\section{Assistant based Knowledge Distillation} \label{sec:method}

\subsection{Background and Notations}

The idea behind knowledge distillation is to have the student network (S) be trained not only via the information provided by true labels but also by observing how the teacher network (T) represents and works with the data. The teacher network is sometimes deeper and wider~\citep{kd}, of similar size~\citep{codistillation,deep-mutual}, or shallower but wider~\citep{romero2014fitnets}.

Let $a_t$ and $a_s$ be the logits (the inputs to the final softmax) of the teacher and student network, respectively.
In classic supervised learning, the mismatch between output of student network softmax$(a_s)$ and the ground-truth label $y_r$ is usually penalized using cross-entropy loss 
\begin{equation}
\mathcal{L}_{SL} =  \mathcal{H}(\text{softmax}(a_s), y_r).
\end{equation}
In knowledge distillation, originally proposed by~\citet{bucilua2006model,ba2014deep} and popularized by~\citet{kd}, one also tries to match the softened outputs of student $y_s = $ softmax($a_s/\tau)$ and teacher $y_t = $softmax($a_t/\tau)$  via a KL-divergence loss
\begin{equation}
\mathcal{L}_{KD} = \tau^2 KL(y_s, y_t)
\end{equation}
Hyperparameter
$\tau$ referred to temperature is introduced to put additional control on softening of signal arising from the output of the teacher network. The student network is then trained under the following loss function:
\begin{equation}
    \mathcal{L}_{\text{student}} = (1-\lambda) \mathcal{L}_{SL} + \lambda \mathcal{L}_{KD}
\end{equation}
where $\lambda$ is a second hyperparameter controlling the trade-off between the two losses. 
We refer to this approach as Baseline Knowledge Distillation (BLKD) through the paper.

\begin{figure}[t!]
\centering
\resizebox{0.95\linewidth}{!}{

\begin{tabular}{cc}
   \hspace{-3mm}
   \includegraphics[width=0.5\linewidth]{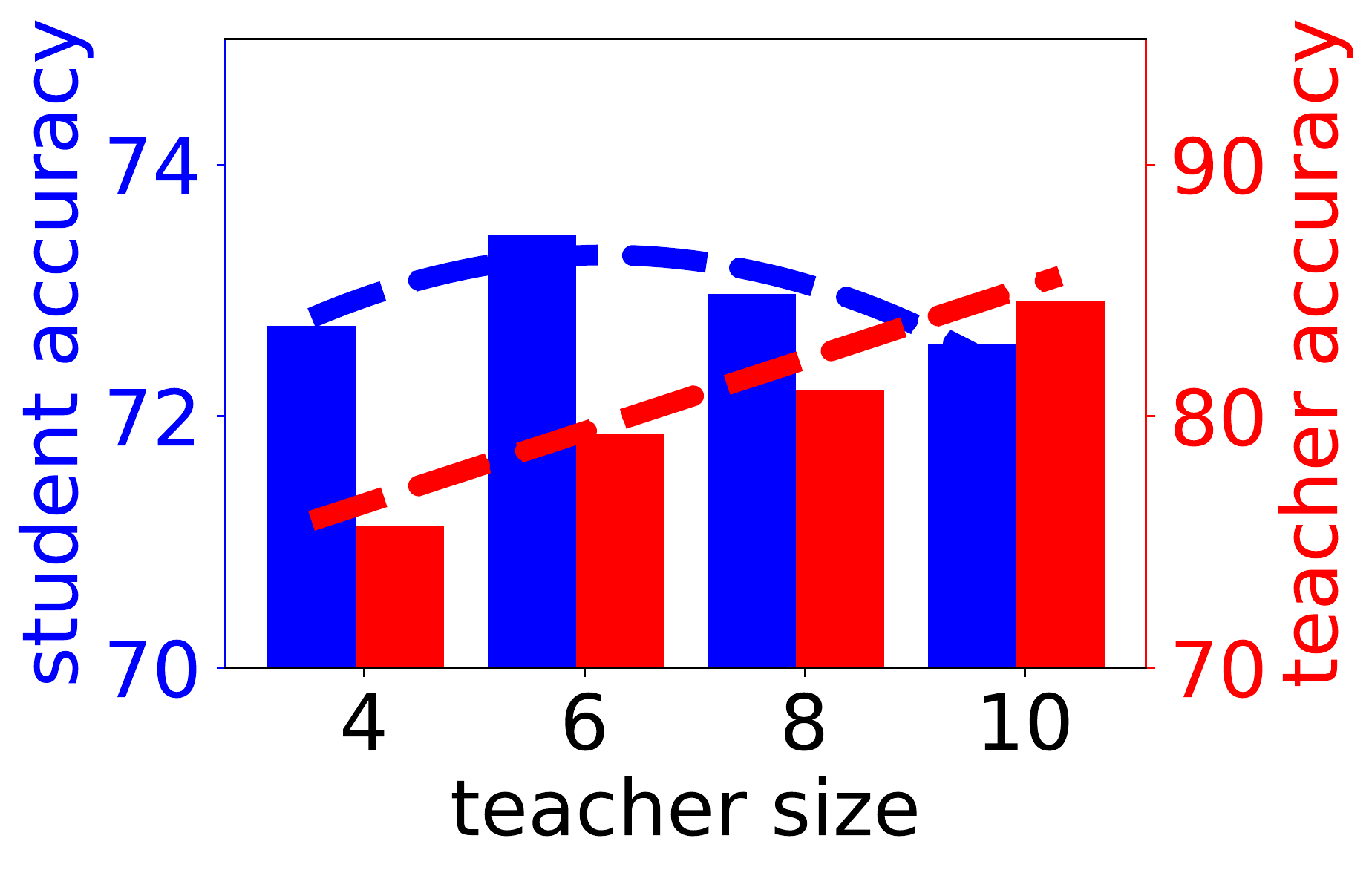}  & \hspace{-4mm}  \includegraphics[width=0.5\linewidth]{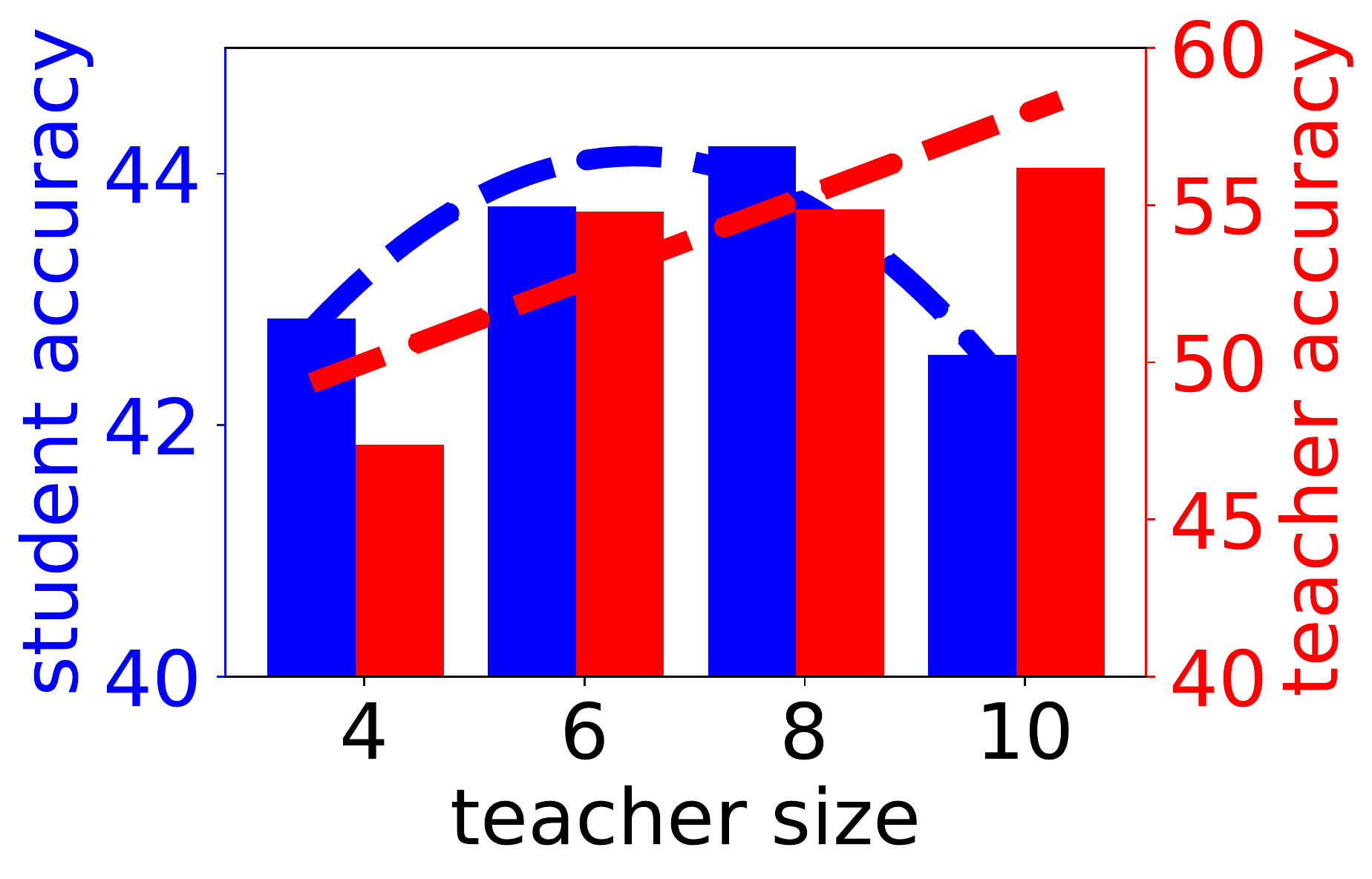} \\
   a) CIFAR-10  & b) CIFAR-100
\end{tabular}
}

\caption{Distillation performance with increasing teacher size. The number of convolutional layers in student is 2.}
\label{fig:increaseing-teacher}
\end{figure}

\subsection{The Gap Between Student and Teacher}
Given a fixed student network, \emph{e.g.}, a Convolutional Neural Network (CNN) with 2 layers to be deployed on a small embedded device, and a pool of larger pre-trained CNNs, which one should be selected as the teacher in the knowledge distillation framework?
The first answer is to pick the strongest which is the biggest one. However, this is not what we observed empirically as showing in Figure~\ref{fig:increaseing-teacher}. Here, a plain CNN student with $2$ convolutional layers is being trained via distillation with similar but larger teachers of size $4$, $6$, $8$, and $10$ on both CIFAR-10 and CIFAR-100 datasets. By size, we mean the number of convolutional layers in the CNN. This number is roughly proportional to the actual size or number of parameters of the neural network and proxy its capacity. Note that they are usually followed by max-pooling or fully connected layers too. We defer the full details on experimental setup to experiments section.

With increasing teacher size, its own (test) accuracy increases (plotted in red on the right axis). However, the trained student accuracy first increases and then decreases (depicted in blue on the left axis).
To explain this phenomenon, we can name a few factors that are competing against each other when enlarging the teacher:
\begin{enumerate}
\item Teacher's performance increases, thus it provides better supervision for the student by being a better predictor.
\item The teacher is becoming so complex that the student does not have the sufficient capacity or mechanics to mimic her behavior despite receiving hints.
\item Teacher's certainty about data increases, thus making its logits (soft targets) less soft. This weakens the knowledge transfer which is done via matching the soft targets. 
\end{enumerate}

Factor 1 is in favor of increasing the distillation performance while factors 2 and 3 are against it. Initially, as the teacher size increases, factor 1 prevails; as it grows larger, factors 2 and 3 dominate.

Similarly, imagine the dual problem. We are given a large teacher network to be used for training smaller students, and we are interested in knowing for what student size this teacher is most beneficial in the sense of boosting the accuracy against the same student learned from scratch. As expected and illustrated in Figure~\ref{fig:increaseing-student}, by decreasing student size, factor 1 causes an increase in the student's performance boost while gradually factors 2 and 3 prevail and worsen the performance gain.

\begin{figure}[t!]
\centering
\begin{tabular}{cc}
   \includegraphics[width=0.425\linewidth]{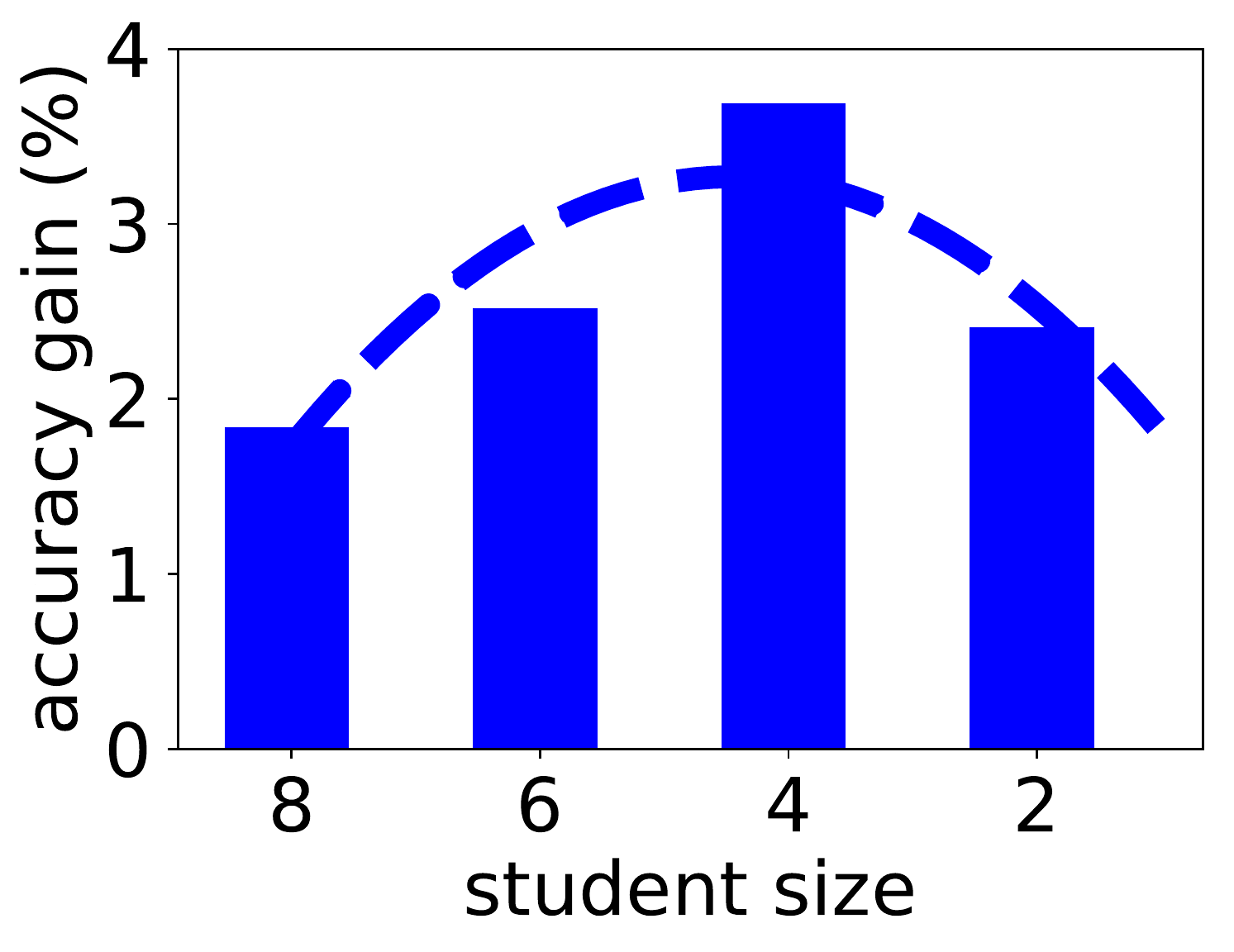}  &   \includegraphics[width=0.425\linewidth]{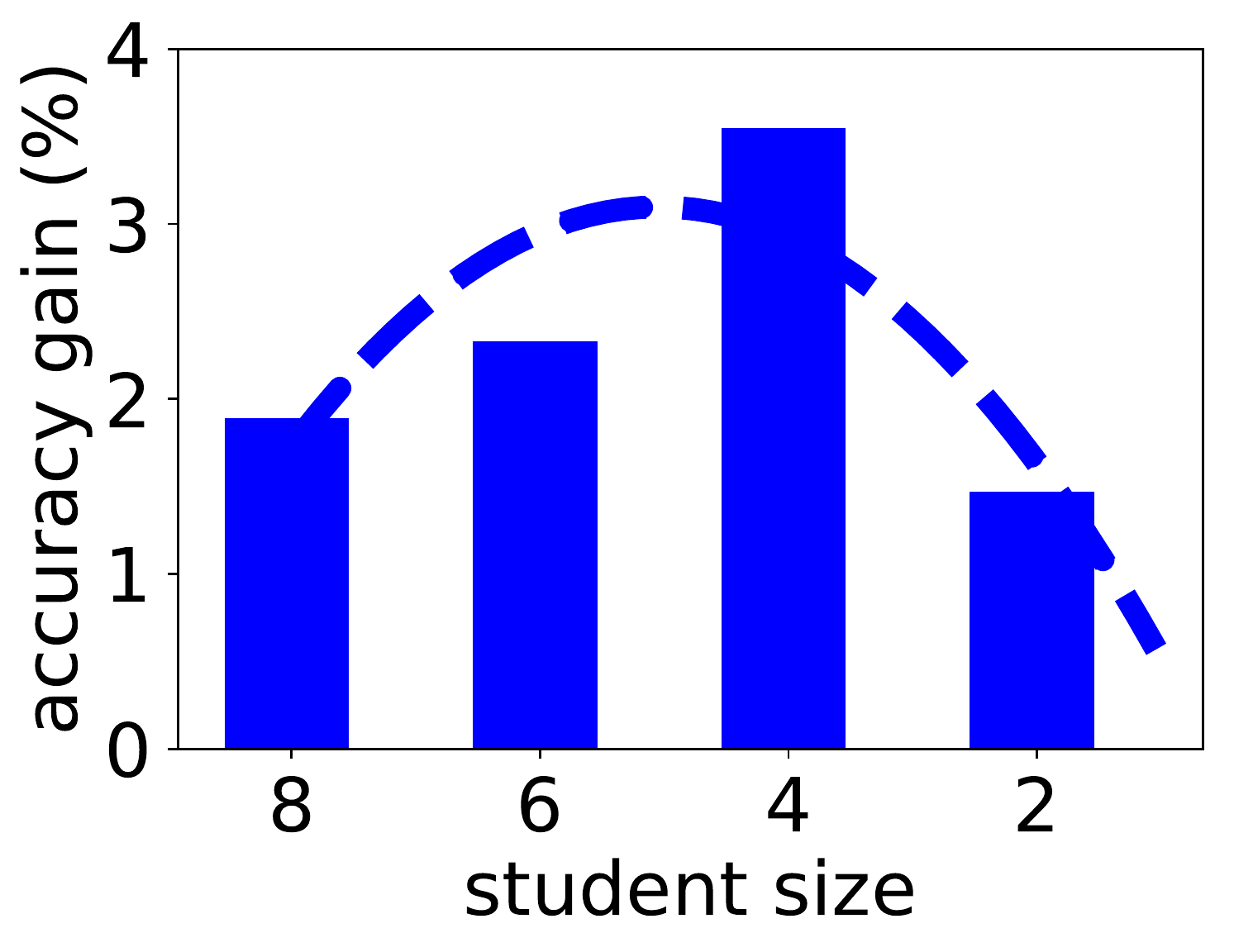}  \\
   a) CIFAR-10  &  b) CIFAR-100
\end{tabular}
\caption{Percentage of distilled student performance increase over the performance when it learns from scratch with varying student size. The teacher has 10 layers.}
\label{fig:increaseing-student} 
\end{figure}

\subsection{Teacher Assistant Knowledge Distillation (TAKD)}
Imagine a real-world scenario where a pre-trained large network is given, and we are asked to distill its knowledge to a fixed and very small student network. 
The gap discussed in the previous subsection makes the knowledge distillation less effective than it could be. Note that we cannot select the teacher size or the student size to maximize the performance. Both are fixed and given.

In this paper, we propose to use intermediate-size networks to fill in the gap between them. The teacher assistant (TA) lies somewhere in between teacher and student in terms of size or capacity. First, the TA network is distilled from the teacher. Then, the TA plays the role of a teacher and trains the student via distillation. This strategy will alleviate factor 2 in the previous subsection by being closer to the student than the teacher. Therefore, the student is able to fit TA's logit distribution more effectively than that of the teacher's. It also alleviates factor 3 by allowing softer (and maybe) less confident targets.
In terms of factor 1, a TA may degrade the performance, however, as we will see in experiments and theoretical analysis sections, both empirical results and theoretical analyses substantiate the effectiveness (improved performance) of TAKD. This happens because encouraging positively correlated factors (like 2 and 3) outweighs the performance loss due to negative ones (like 1).

It will be demonstrated in experiments that TA with any intermediate size always improves the knowledge distillation performance. However, one might ask what the optimal TA size for the highest performance gain is? If one TA improves the distillation result, why not also train this TA via another distilled TA?
Or would a TA trained from scratch be as effective as our approach? In the following sections, we try to study and answer these questions from empirical perspectives complemented with some theoretical intuitions.


\section{Experimental Setup}
\label{sec:setup}
We describe in this section the settings of our experiments.

\textbf{Datasets.}
We perform a set of experiments on two standard datasets CIFAR-10 and CIFAR-100 and one experiment on the large-scale ImageNet dataset.
The datasets consist  of $32 \times 32$ RGB images. The task for all of them is to classify images into image categories. CIFAR-10, CIFAR-100 and ImageNet contain 10 and 100 and 1000 classes, respectively. 

\begin{table}[t!]
\centering
\caption{Comparison on evaluation accuracy between our method (TAKD) and baselines. For CIFAR, plain (S=$2$, TA=$4$, T=$10$) and for ResNet  (S=$8$, TA=$20$, T=$110$) are used. For ImageNet, ResNet (S=$14$, TA=$20$, T=$50$) is used. Higher numbers are better.}
\label{table:general-comprisson}
\vskip 3pt
\resizebox{0.95\linewidth}{!}{
\begin{tabular}{ccccc}
\toprule
\bf Model & \bf Dataset & \bf NOKD & \bf BLKD & \bf TAKD \\ \midrule
\multirow{2}{*}{CNN} & CIFAR-10 & 70.16 & 72.57 & 73.51 \\ 
& CIFAR-100 & 41.09 & 44.57  & 44.92 \\ 
\midrule
\multirow{2}{*}{ResNet}& CIFAR-10 & 88.52 & 88.65  &  88.98 \\ 
& CIFAR-100 & 61.37 & 61.41  & 61.82 \\ 
\midrule
ResNet & ImageNet & 65.20 & 66.60 & 67.36\\
\bottomrule
\end{tabular}
}
\end{table}

\textbf{Implementation.}   We used PyTorch \citep{paszke2017automatic} framework for the implementation\footnote{Codes and Appendix are available at the following address: \url{https://github.com/imirzadeh/Teacher-Assistant-Knowledge-Distillation}} and as a preprocessing step, we transformed images to ones with zero mean and standard deviation of 0.5. For optimization, we used stochastic gradient descent with Nesterov momentum of 0.9 and learning rate of 0.1 for 150 epochs. For experiments on plain CNN networks, we used the same learning rate, while for ResNet training we decrease learning rate to 0.01 on epoch 80 and 0.001 on epoch 120. We also used weight decay with the value of $0.0001$ for training ResNets. To attain reliable results, we performed all the experiments with a hyper-parameter optimization toolkit \citep{MSNNI} which uses a tree-structured Parzen estimator to tune hyper-parameters as explained in \citep{Bergstra2011AlgorithmsFH}. Hyper-parameters include distillation trade-off $\lambda$ and temperature $\tau$  explained in the previous section. 
It's notable that all the accuracy results reported in this paper, are the top-1 test accuracy reached by the hyper-parameter optimizer after running each experiment for 120 trials.

\textbf{Network Architectures.}
We evaluate the performance of the proposed method on two architectures. The first one is a VGG like architecture (plain CNN) consists of convolutional cells (usually followed by max pooling and/or  batch normalization) ended with fully connected layer. We take the number of convolutional cells as a proxy for size or capacity of the network.  The full details of each plain CNN network is provided in the appendix\footnote{Appendix is available along with the code repository.}.
We also used ResNet as a more advanced CNN architecture with skip connections. We used the structures proposed in the original paper~\citep{he2016deep}. The number of blocks in the ResNet architecture is served as a proxy for the size or flexibility of the network.

\begin{table}[t!]
\caption{Student's accuracy given varied TA sizes for (S=$2$, T=$10$)}
\vskip 3pt
\label{table:varying-ta-size}
\centering
\resizebox{0.90\linewidth}{!}{

\begin{tabular}{ccccc}
\toprule
\bf Model & \bf Dataset & \bf TA=8 & \bf TA=6 & \bf TA=4 \\ \midrule
\multirow{2}{*}{CNN} & CIFAR-10 & 72.75 & 73.15 & 73.51 \\ 
& CIFAR-100 & 44.28 & 44.57  &  44.92 \\
\bottomrule
\end{tabular}
}
\end{table}

\begin{table}[t!]
\caption{Student's accuracy given varied TA sizes for (S=$8$, T=$110$)}
\vskip 3pt
\centering
\label{table:varying-ta-size-resnet}
\resizebox{0.95\linewidth}{!}{
\begin{tabular}{cccccc}
\toprule
\bf Model & \bf Dataset & \bf TA=56 &  \bf TA=32 & \bf TA=20 & \bf TA=14 \\ \midrule
\multirow{2}{*}{ResNet } & CIFAR-10 & 88.70 & 88.73 & 88.90 & 88.98 \\ 
& CIFAR-100 & 61.47 & 61.55  & 61.82 & 61.5  \\ 
\bottomrule
\end{tabular}
}
\end{table}


\section{Results and Analysis} 
\label{sec:exp}
In this section, we evaluate our proposed Teacher Assistant Knowledge Distillation  (TAKD) and investigate several important questions related to this approach. Throughout this section, we use S=$i$ to represent the student network of size $i$, T=$j$ to represent a teacher network of size $j$ and TA=$k$ to represent a teacher assistant network of size $k$.
As a reminder by size we mean the number of convolutional layers for plain CNN and ResNet blocks for the case of ResNet. These serve as a proxy for the size or the number of parameters or capacity of the network.

\subsection{Will TA Improve Knowledge Distillation?}
First of all, we compare the performance of our Teacher Assistant based method (TAKD) with the baseline knowledge distillation (BLKD) and with training normally without any distillation (NOKD) for the three datasets and two architectures.
Table~\ref{table:general-comprisson} shows the results.
It is seen the proposed  method outperforms both the baseline knowledge distillation and the normal training of neural networks by a reasonable margin. We include ImageNet dataset only for this experiment to demonstrate TAKD works for the web-scale data too. For the rest of the paper we work with CIFAR10 and CIFAR100.

\begin{figure}[t!]
\centering
\resizebox{0.85\linewidth}{!}{

\begin{tabular}{cc}t
    \hspace{-3mm}
    \includegraphics[width=0.499\linewidth]{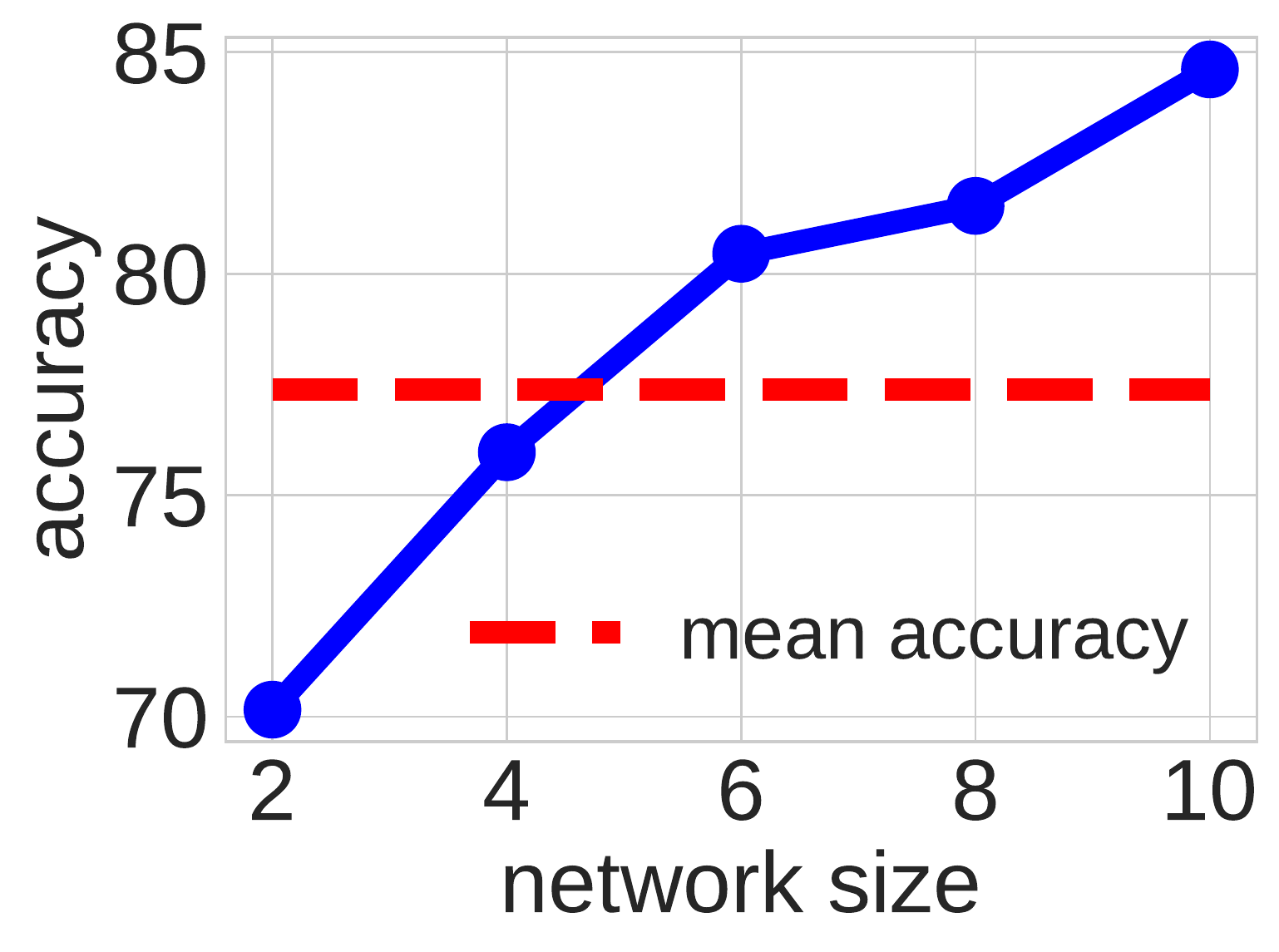}  & 
    \hspace{-3mm}
    \includegraphics[width=0.499\linewidth]{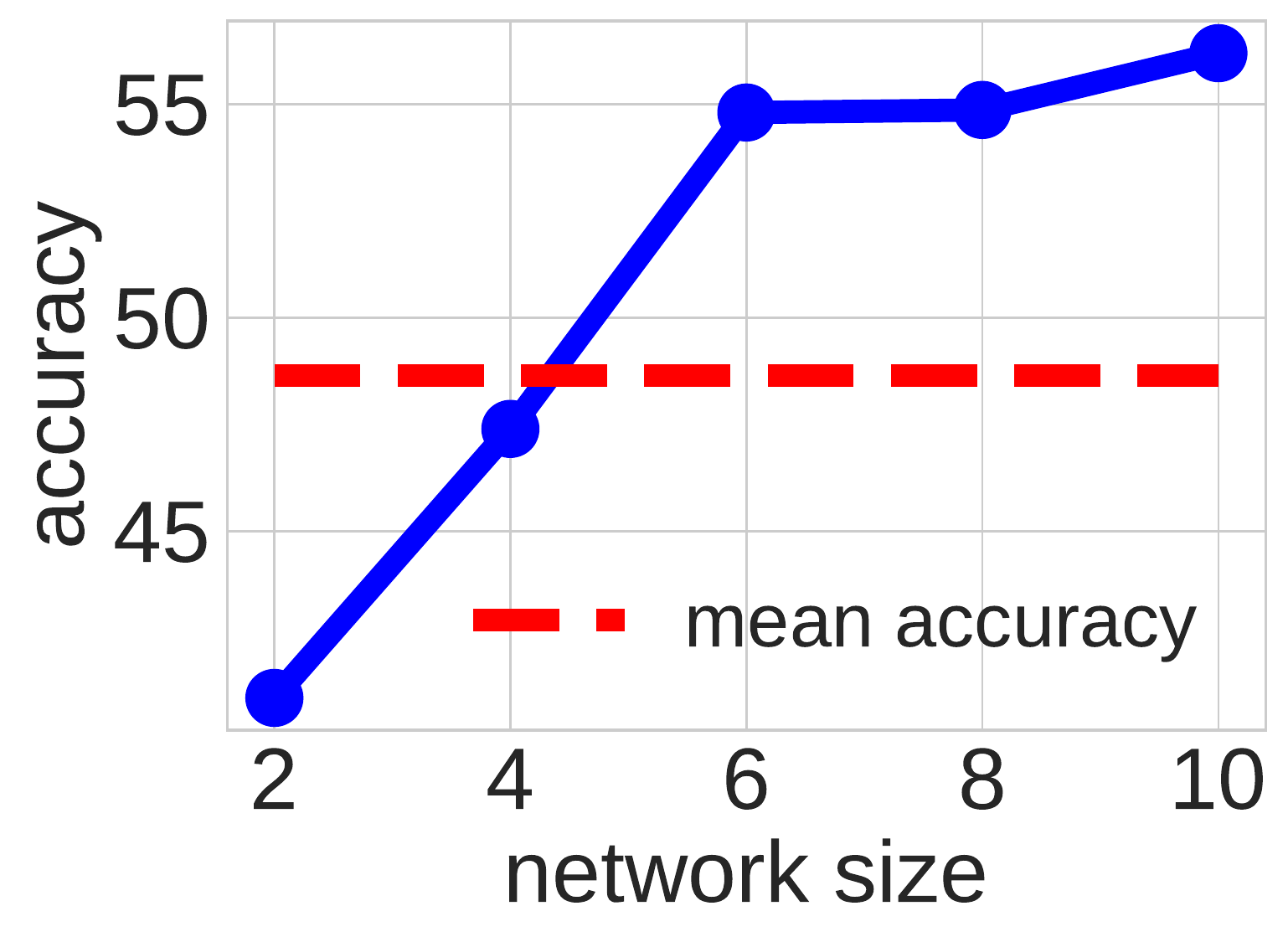}  \\
    (a) CIFAR-10, Plain CNN & (b) CIFAR-100, Plain CNN \\
     \hspace{-3mm}
     \includegraphics[width=0.499\linewidth]{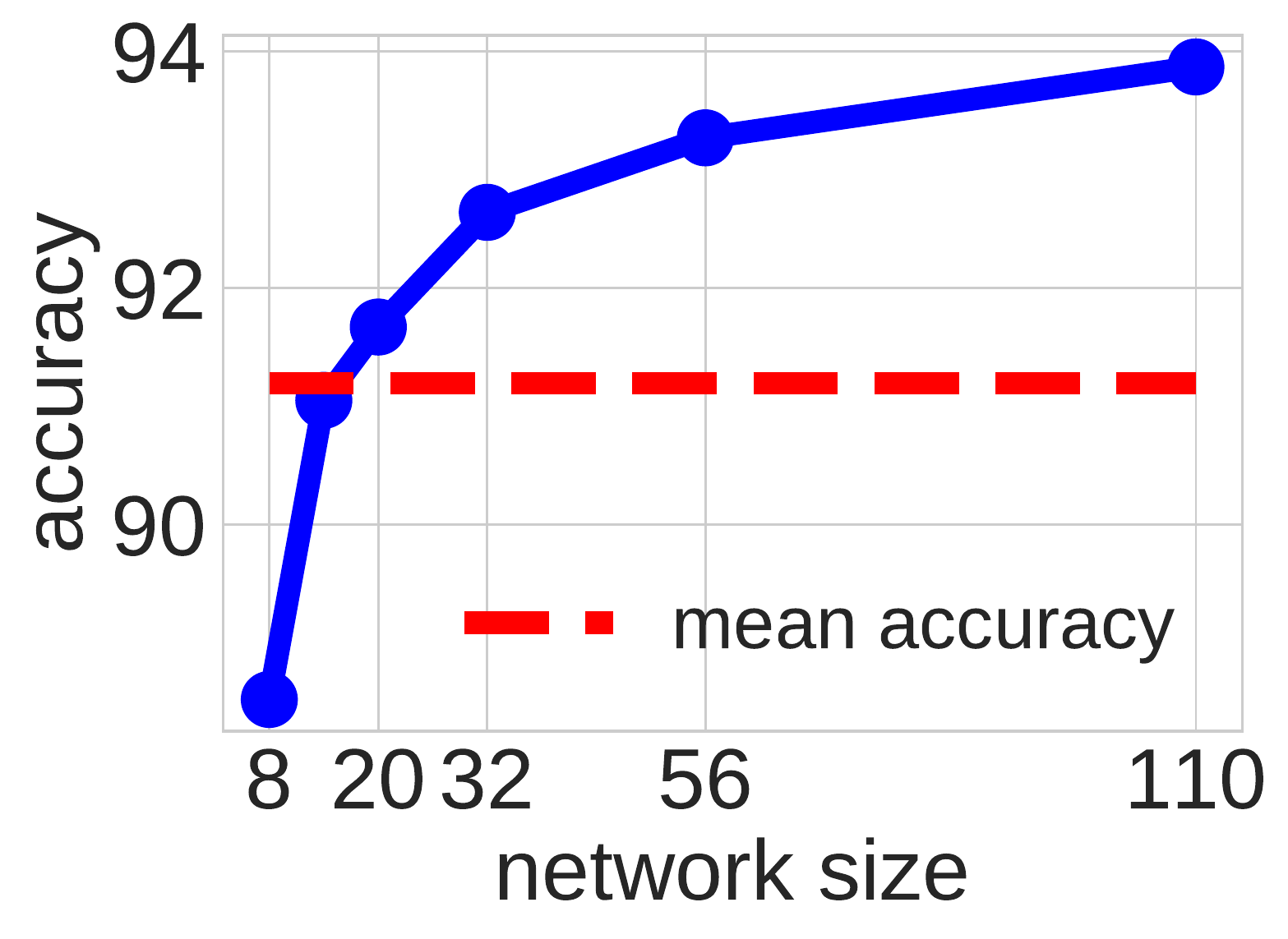} &  
     \hspace{-3mm}
     \includegraphics[width=0.499\linewidth]{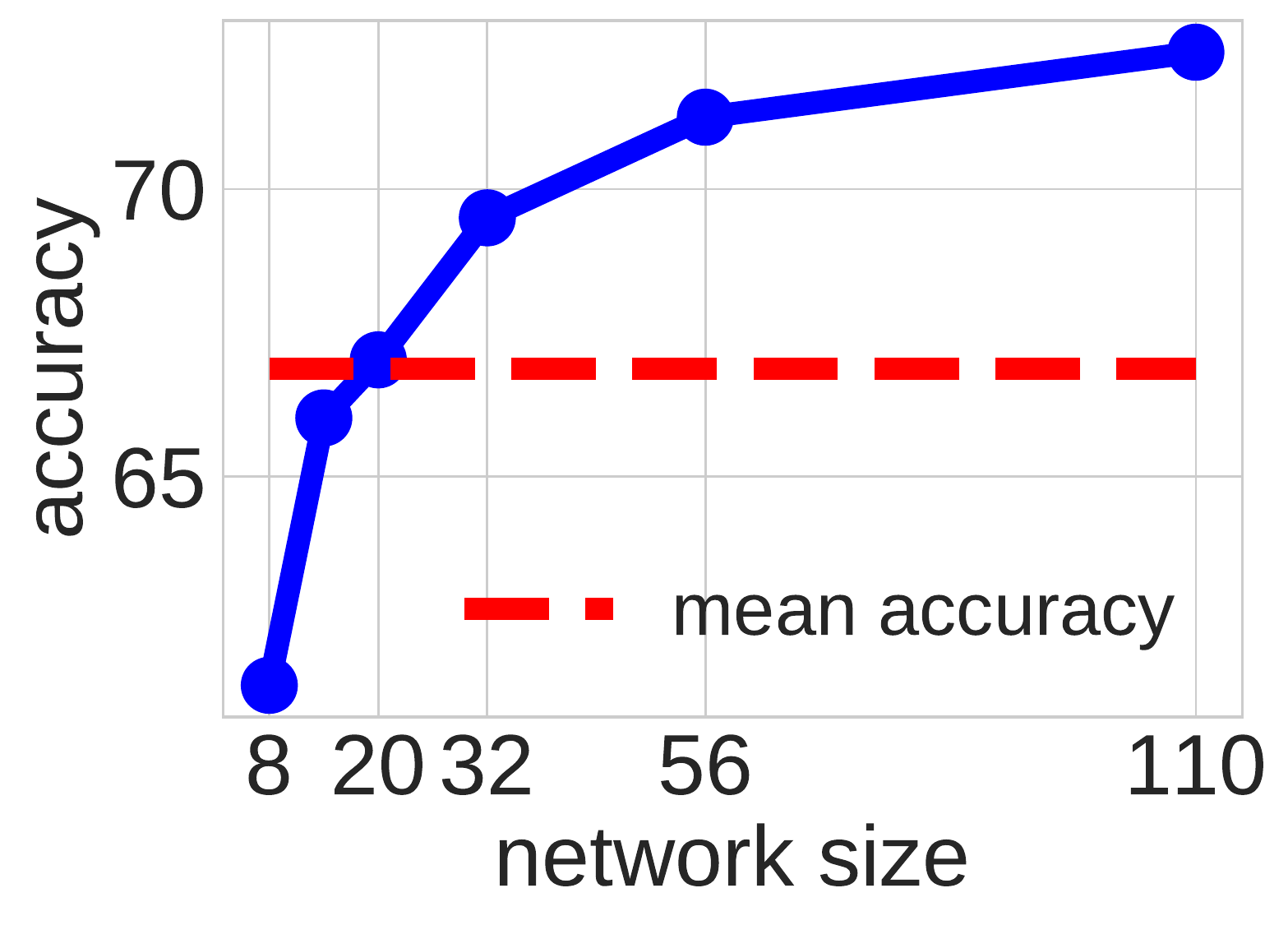} \\
     (c) CIFAR-10, ResNet & (d) CIFAR-100, ResNet
\end{tabular}
}
\caption{Accuracy of training from scratch for different network sizes.  The dashed red line shows the average performance of the teacher and student.}
\label{fig:self-eprformance}
\end{figure}

\subsection{What is the Best TA Size?}
The benefits of having a teacher assistant as an intermediary network for transferring knowledge comes with an essential burden -- selecting the proper TA size.  We evaluate the student's accuracy given varied TA sizes for plain CNN in Table~\ref{table:varying-ta-size} and for ResNet in~\ref{table:varying-ta-size-resnet}, respectively. 

The first observation is that having a TA (of any size) improves the result compared to BLKD and NOKD reported in Table~\ref{table:general-comprisson}. 
Another observation is that for the case of CNN, TA=$4$ performs better than TA=$6$ or TA=$8$. One might naturally ask why $4$ is the best while $6$ seems to be better bridge as it is exactly lies between $2$ and $10$?
Alternatively, we note that for both CIFAR-10 and CIFAR-100, the optimal TA size ($4$) is actually placed close to the middle in terms of average accuracy rather than the average of size.  Figure~\ref{fig:self-eprformance}-a,b depicts the accuracy of a trained neural network with no distillation in blue while the mean accuracy between S=$2$ and T=$10$ is depicted in red dashed line. The figure shows that for both of them, size $4$ is closer to the mean value compared to $6$ or $8$. 
For ResNet in Table~\ref{table:varying-ta-size-resnet} for CIFAR-10, TA=$14$ is the optimum, while, for CIFAR-100, TA=$20$ is the best. Interestingly, Figure~\ref{fig:self-eprformance}-c,d confirms that for CIFAR-10, TA=$14$ is closer to the mean performance of size $8$ and $110$ while TA=$20$ is so for CIFAR-100.
Incorporating a TA with size close to the average performance of teacher and student seems to be a reasonable heuristic to find the optimal TA size, however, more systematic theoretical and empirical investigation remains an interesting venue for future work.

\begin{figure*}[t]
\centering
\begin{tabular}{c}
 \includegraphics[width=0.75\linewidth]{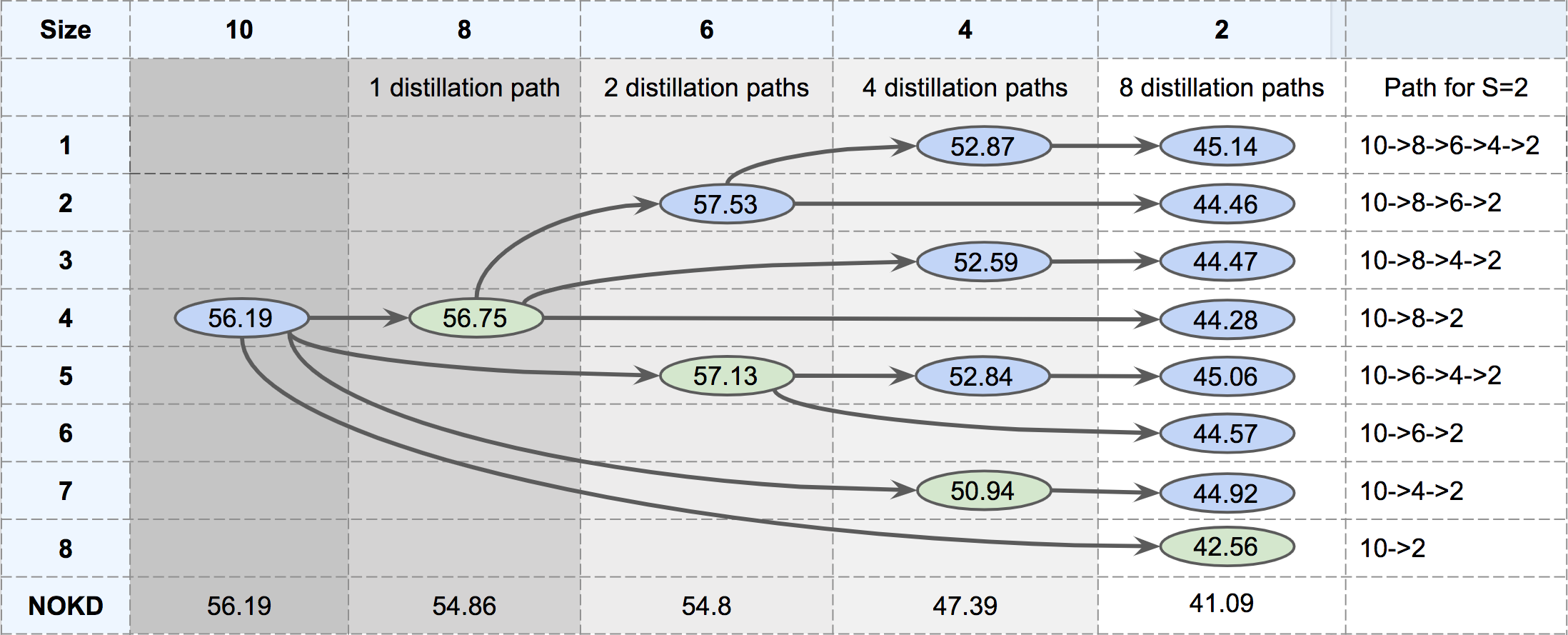}
\end{tabular}
\caption{Distillation paths for plain CNN on CIFAR-100 with T=$10$ }
\label{fig:multi-ta}
\end{figure*}


\subsection{Why Limiting to 1-step TA?}
\label{subsec:multi-ta}
We have seen that for CNN networks on CIFAR-100, incorporating a TA=$4$ between S=$2$ and T=$10$ improves the student. 
However, to train TA=$4$ via distillation from T=$10$, one may propose to put another TA (Say TA =$ 6$) in between to enhance the TA training via another distillation. 
 Using a simplified notation we represent the above sequential distillation process by the \emph{distillation path} $10\rightarrow 6 \rightarrow 4\rightarrow 2$.
Even, one could go further and do a distillation via the path $10\rightarrow 8\rightarrow 6\rightarrow 4 \rightarrow 2$.

To investigate this extension we evaluate all the possible distillation paths and show their outcomes in a single graph in Figure~\ref{fig:multi-ta}. To simplify the presentation we only include networks with even numbers of layers. The numbers in each oval are the accuracy on CIFAR-100 trained on CNN network using the corresponding distillation paths.
A benefit of this visualization is not only that we can study the transfer results to S=$2$, but also for intermediate sizes.
Given $n$ possible intermediate networks, there are $2^n$ possible paths. For example, the 4 possible paths to transfer from T=10 to S=4 are shown in column associated to size 4. For better comparison, the direct transfer (associated to BLKD) are colored in green while the performance without distillation (NOKD) is shown in the last row.

By studying this figure we get interesting insights. Firstly, it is clear that, for all the student sizes (S=$2$,$4$,$6$), TAKD works better than BLKD or NOKD. No matter how many TAs are included in the distillation path, one can obtain better students compared to BLKD and NOKD.
Secondly, the column associated with size 2 reveals that all multi-step TAKD variants work comparably good and considerably better than BLKD and NOKD. 
Thirdly, for S=$2$ and S=$4$, a full path going through all possible intermediate TA networks performs the best.
According to these observations, one can choose a distillation path based on the time and computing resources available. Without any constraint, a full distillation path is optimal (refer to appendix for details).
However, an interesting extension is to limit the number of intermediate TA networks. Can we find the best path in that setting? Given a student and teacher is there a way to automatically find the best path given the constraints? In the appendix section we provide a discussion for these problems.

\subsection{Comparison with Other Distillation Methods}
Since the rediscovery of the basic knowledge distillation method~\citep{kd} many variants of it has been proposed. In 
Fig~\ref{fig:gaps}-right
we have compared the performance of our proposed framework via a single TA with some of the most recent state-of-the-art ones reported and evaluated by~\citet{heo2018improving}.
The 'FITNET'~\citep{romero2014fitnets} proposed to match the knowledge in the intermediate layers.
The method denoted as 
'AT' proposed spatial transfer between teacher and student~\citep{zagoruyko2016paying}. 
In the 'FSP' method, a channel-wise correlation matrix is used as the medium of knowledge transfer~\citep{kd-gift}.
The method 'BSS'~\citep{heo2018improving} trains a student classifier based on the adversarial samples supporting the decision boundary. For these the numbers are reported from the paper~\citep{heo2018improving}. To make a fair comparison, we used exactly the same setting for CIFAR-10 experiments.
In addition to 50K-10K training-test division, all classifiers were trained $80$ epochs. Although we found that more epochs (e.g. $160$) further improves our result, we followed their setting for a fair comparison. ResNet26 is the teacher and ResNet8 and ResNet14 are the students. In addition, we compared with deep mutual learning, 'MUTUAL', with our own implementation of the proposed algorithm in ~\citep{deep-mutual} where the second network is the teacher network. Also, since deep mutual learning needs an initial training phase for both networks, we did this initialization phase for $40$ epochs for both networks and then, trained both networks mutually for $80$ epochs, equal to other modes. For our method, we used TAs ResNet20 and ResNet14 for students ResNet14 and ResNet8, respectively. It's seen that our TA-trained student outperforms all of them.
Note that our proposed framework can be combined with all these variants to improve them too.


\section{Why Does Distillation with TA work?}
\label{sec:why}
In this section, we try to shed some light on why and how our TA based knowledge distillation  is better than the baselines.

\subsection{Theoretical Analysis}
According to the VC theory~\citep{vapnik1998statistical} one can decompose the classification error of a classifier $f_s$ as
\begin{equation}
    R(f_s) - R(f_r) \le O \left(
    \frac{|\mathcal{F}_s|_C}{n^{\alpha_{sr}}} \right) + \epsilon_{sr},
    \label{eq:student-from-real}
\end{equation}
where, the $O(\cdot)$ and  $\epsilon_{sr}$ terms are the estimation and approximation error, respectively. 
The former is related to the statistical procedure for learning given the number of data points, while the latter is characterized by the capacity of the learning machine.
Here, $f_r \in \mathcal{F}_r$ is the real (ground truth) target function and $f_s \in \mathcal{F}_s$ is the student function, $R$ is the error, $|\cdot|_C$ is some function class capacity measure, $n$ is the number of data point, and finally $\frac{1}{2} \le  \alpha_{sr} \le 1$ is related to the learning rate acquiring small values close to $\frac{1}{2}$ for difficult problems while being close to $1$ for easier problems. Note that $\epsilon_{sr}$ is the approximation error of the student function class $\mathcal{F}_s$ with respect to $f_r \in \mathcal{F}_r$. Building on the top  of~\citet{lopez2015unifying}, we extend their result and investigate why and when introducing a TA improves knowledge distillation.
In Equation~\eqref{eq:student-from-real} student learns from scratch (NOKD). 
Let $f_t \in \mathcal{F}_t$ be the teacher function, then 
\begin{equation}
    R(f_t) - R(f_r) \le O \left(
    \frac{|\mathcal{F}_t|_C}{n^{\alpha_{tr}}} \right) + \epsilon_{tr},
    \label{eq:teacher-from-real}
\end{equation}
where, $\alpha_{tr}$ and $\epsilon_{tr}$ are correspondingly defined for teacher learning from scratch.  
Then, we can transfer the knowledge of the teacher directly to the student and retrieve the baseline knowledge distillation (BLKD). To simplify the argument we assume the training  is done via pure distillation ($\lambda = 1$):
\begin{equation}
    R(f_s) - R(f_t) \le O \left(
    \frac{|\mathcal{F}_s|_C}{n^{\alpha_{st}}} \right) + \epsilon_{st},
    \label{eq:student-from-teacher}
\end{equation}
where $\alpha_{st}$ and $ \epsilon_{st}$ are associated to student learning from teacher.
If we combine Equations~\eqref{eq:teacher-from-real} and \eqref{eq:student-from-teacher} we get
\begin{equation}
    O \left(
    \frac{|\mathcal{F}_t|_C}{n^{\alpha_{tr}}}
    + \frac{|\mathcal{F}_s|_C}{n^{\alpha_{st}}}
    \right) + \epsilon_{tr} + \epsilon_{st}  \le O \left(
    \frac{|\mathcal{F}_s|_C}{n^{\alpha_{sr}}} \right) + \epsilon_{sr}
    \label{eq:blkd}
\end{equation}
to hold for BLKD to be effective.
In line with our finding, but with a little different formulation,~\citet{lopez2015unifying} pointed out $|\mathcal{F}_t|_C$ should be small, otherwise the BLKD would not outperform NOKD. We acknowledge that similar to ~\citet{lopez2015unifying}, we work with the upper bounds not the actual performance and also in an asymptotic regime. 
Here we built on top of their result and put a (teacher) assistant between student and teacher
\begin{equation}
    R(f_s) - R(f_a) \le O \left(
    \frac{|\mathcal{F}_s|_C}{n^{\alpha_{sa}}} \right) + \epsilon_{sa},
    \label{eq:student-from-ta}
\end{equation}
and, then the TA itself learns from the teacher
\begin{equation}
    R(f_a) - R(f_t) \le O \left(
    \frac{|\mathcal{F}_a|_C}{n^{\alpha_{at}}} \right) + \epsilon_{at},
    \label{eq:ta-from-teacher}
\end{equation}
where, $\alpha_{sa}$, $\epsilon_{sa}$, 
$\alpha_{at}$, and $\epsilon_{at}$ are defined accordingly.
Combing Equations~\eqref{eq:teacher-from-real}, ~\eqref{eq:student-from-ta},  and~\eqref{eq:ta-from-teacher}  leads to the following equation that needs to be satisfied in order to TAKD outperforms BLKD and NOKD, respectively:

\begin{figure*}[t]
\centering
\begin{tabular}{cc}
\includegraphics[width=0.20\linewidth]{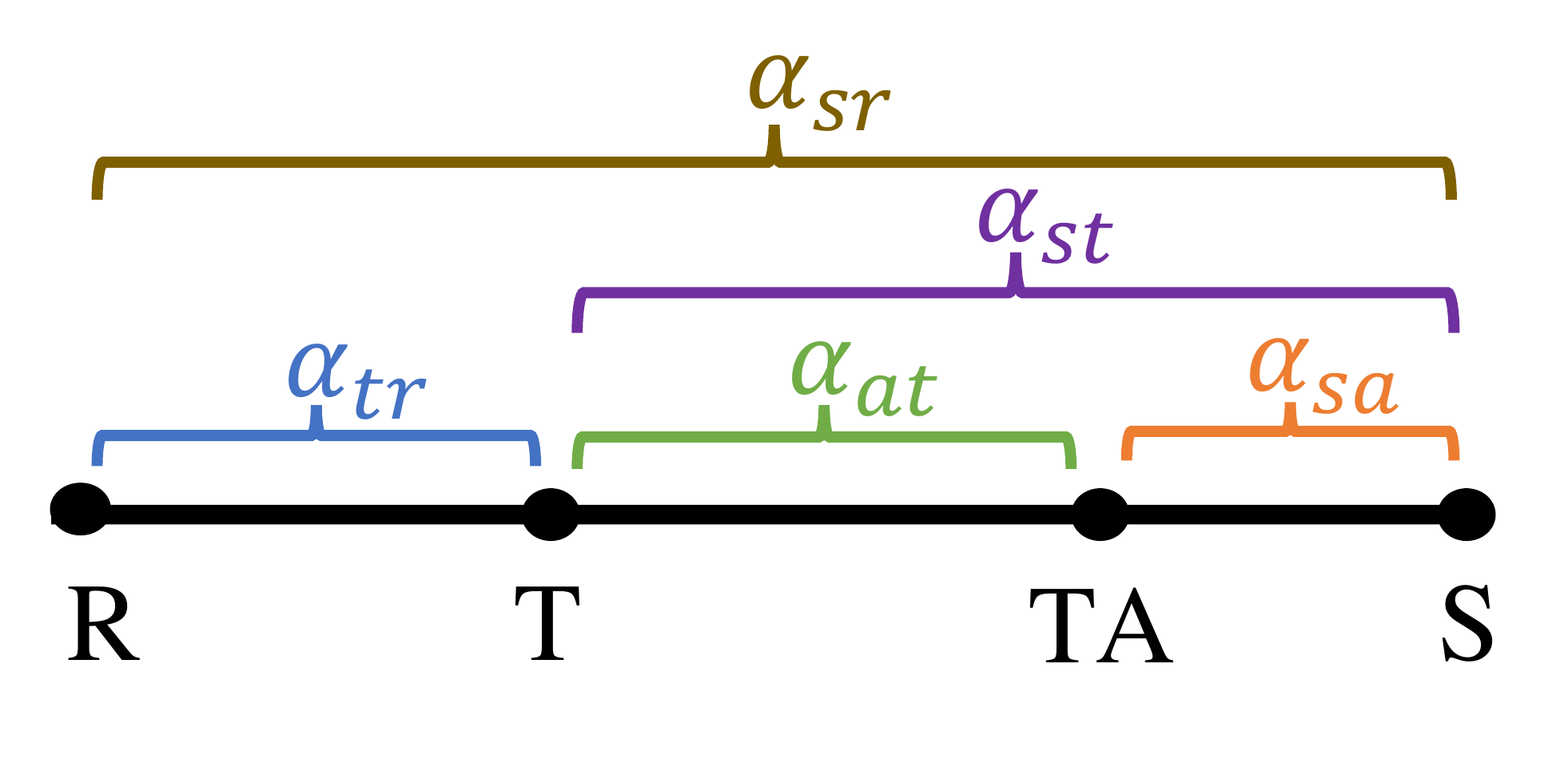} & 
\includegraphics[width=0.65\linewidth]{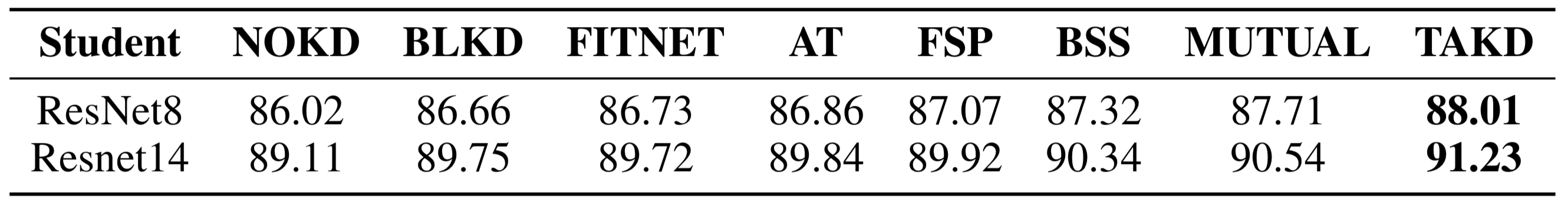}
\end{tabular}
\caption{Left) rate of learning between different targets (longer distance means lower $\alpha_{\cdot \cdot}$);
Right) Table for Comparison of TAKD with distillation alternatives on ResNet8 and ResNet14 as student and ResNet26  teacher} 
\label{fig:gaps}
\end{figure*}

\begin{figure*}[t!]
\centering
\resizebox{0.75\linewidth}{!}{

\begin{tabular}{ccc}
 \includegraphics[width=0.26\linewidth]{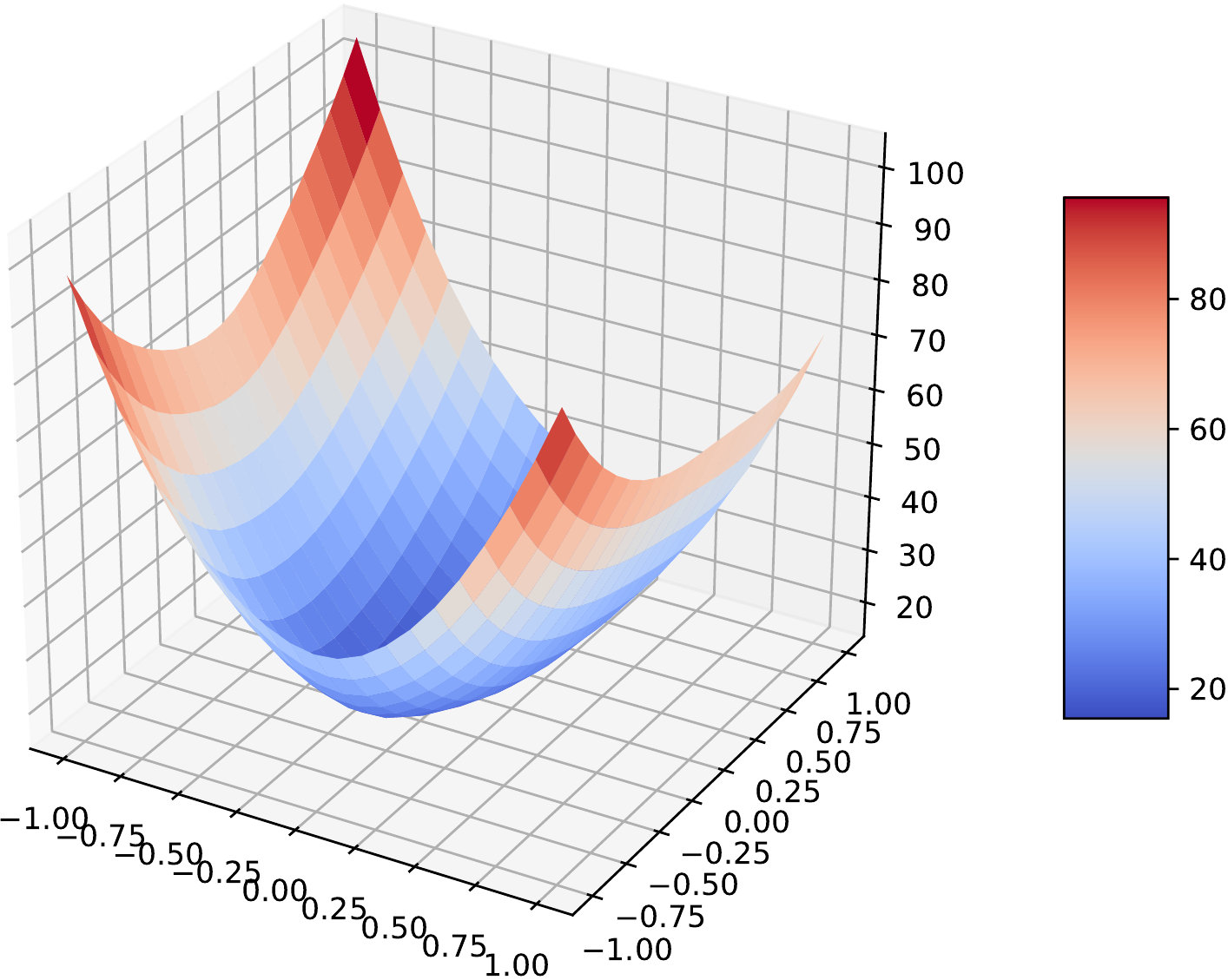} &
 \includegraphics[width=0.26\linewidth]{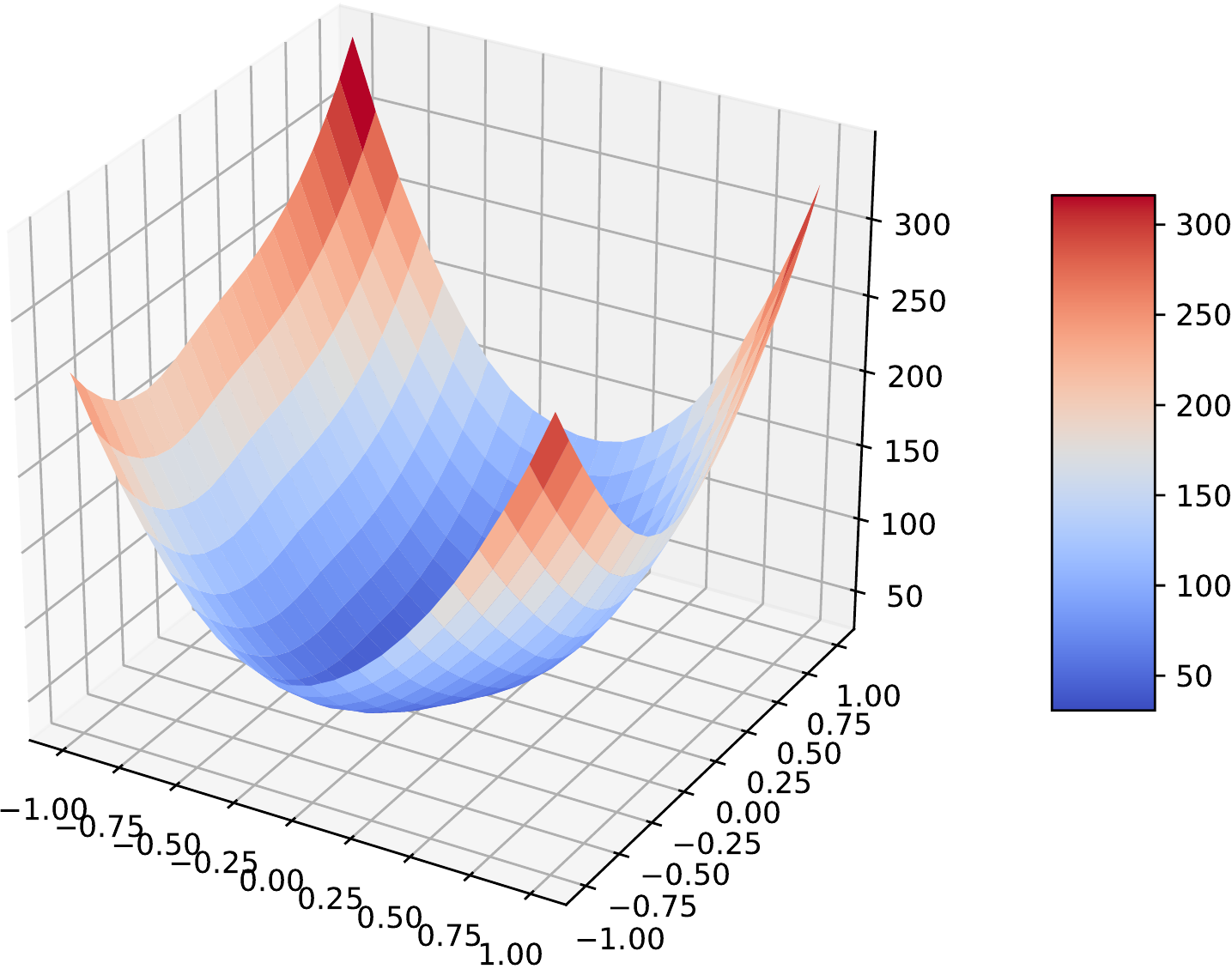} &
 \includegraphics[width=0.26\linewidth]{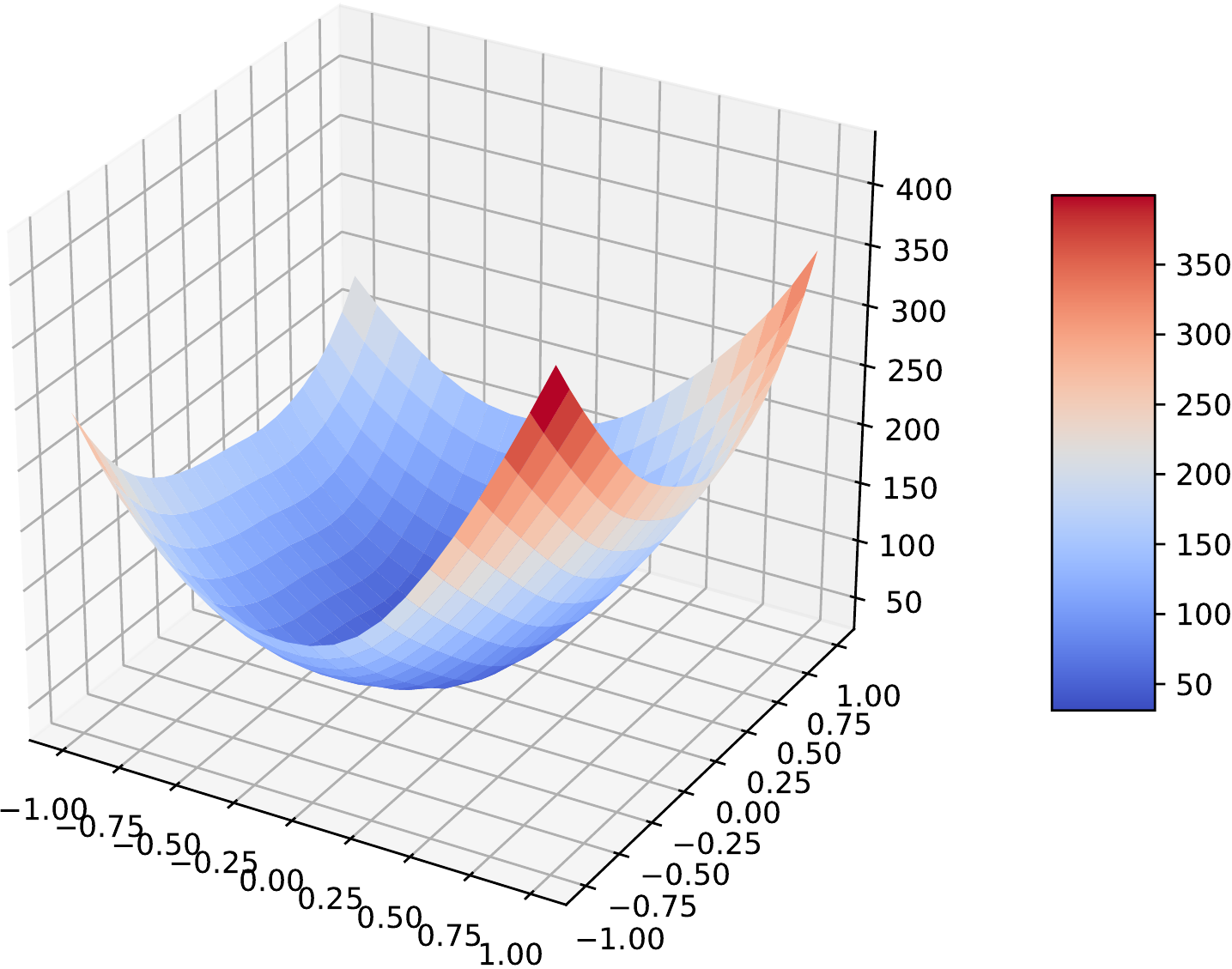} \\
 (a) NOKD & (b) BLKD ($10 \rightarrow 2$) & (c) TAKD ($10\rightarrow 4 \rightarrow 2$) \\
 \includegraphics[width=0.26\linewidth]{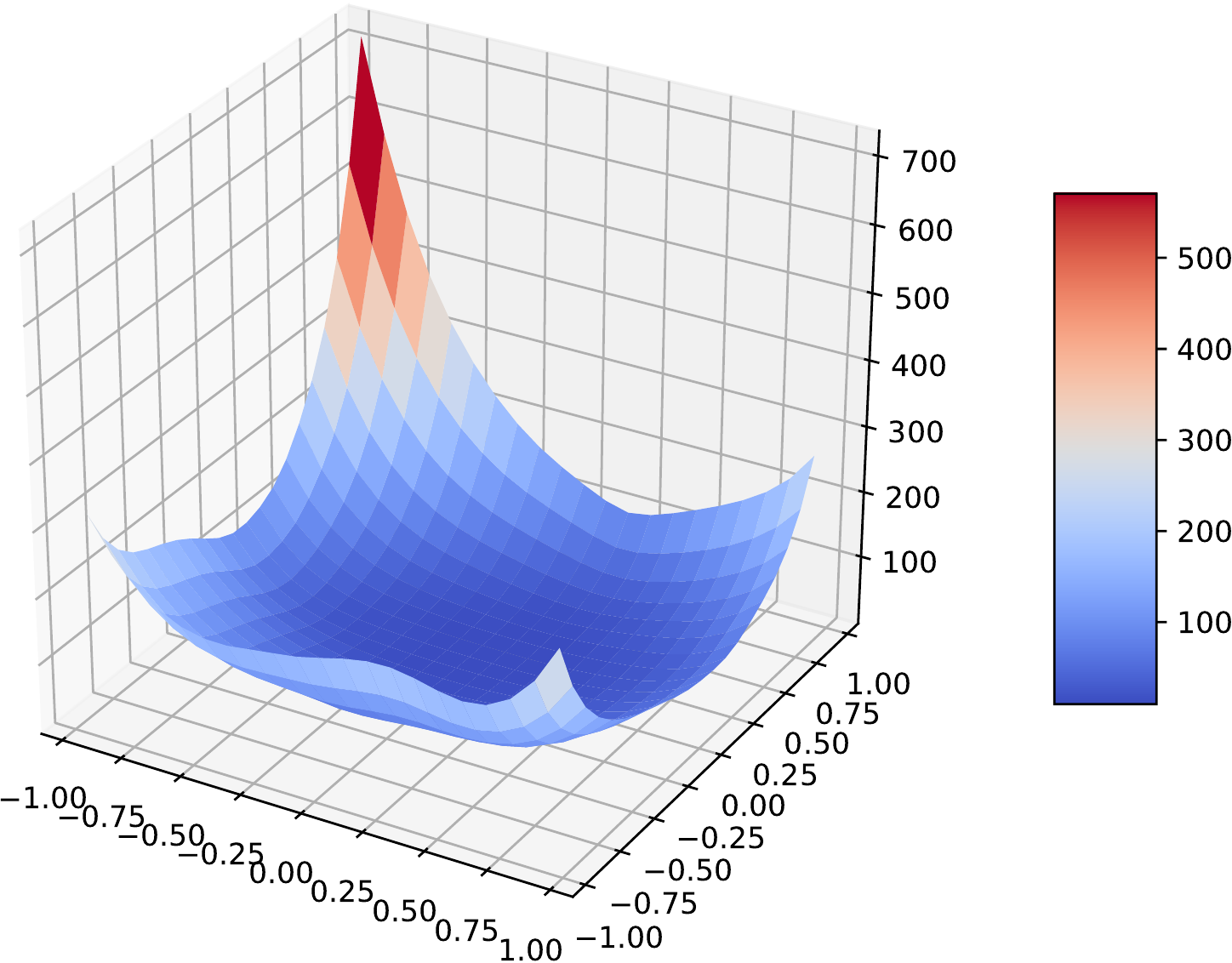} &
 \includegraphics[width=0.26\linewidth]{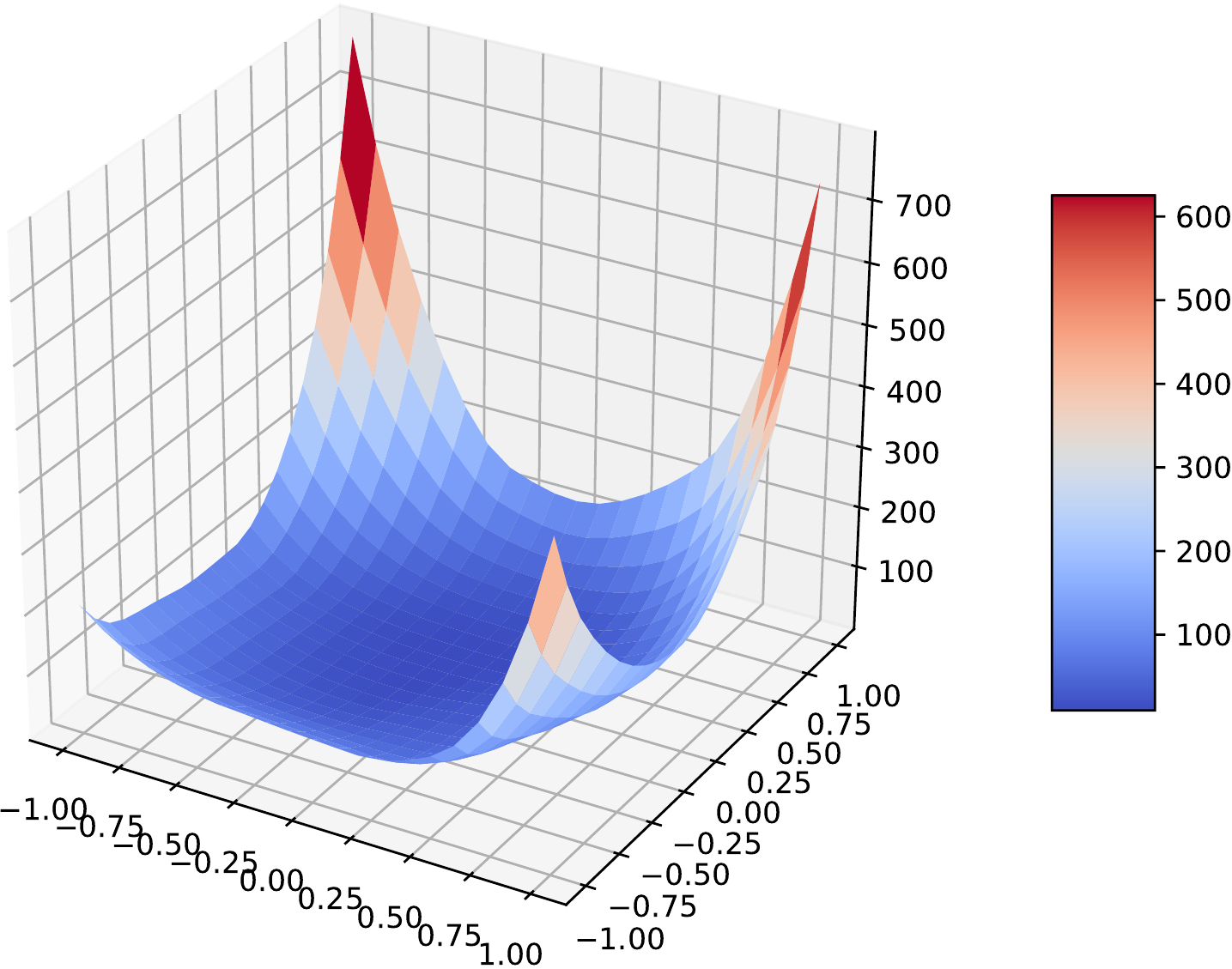} &
 \includegraphics[width=0.26\linewidth]{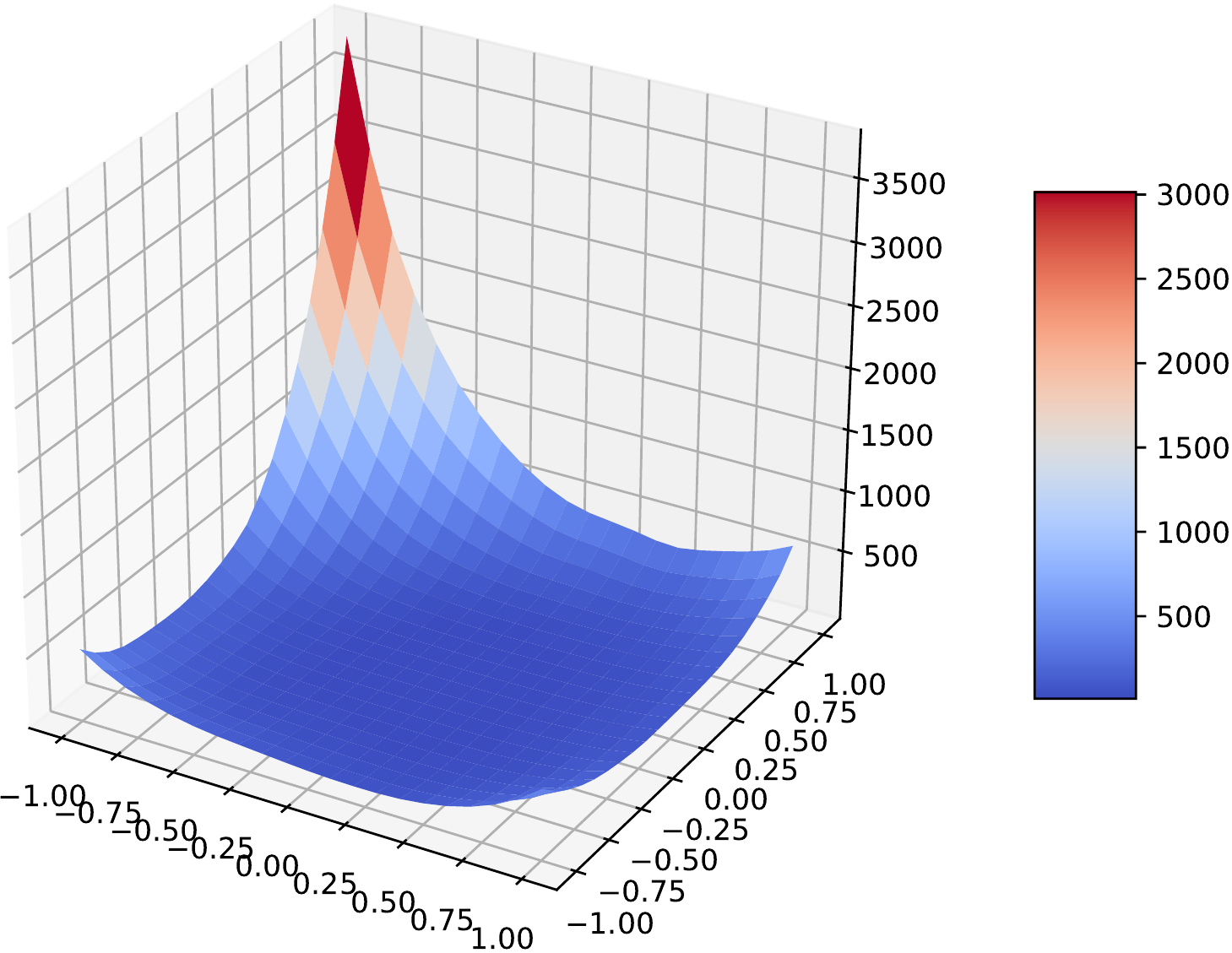} \\
  (d) NOKD & (e) BLKD ($110 \rightarrow 8$)  & (f) TAKD ($110 \rightarrow  20 \rightarrow 8$)
\end{tabular}
}
\caption{Loss landscape around local minima. Top) plain CNN for student of size 2. Bottom: ResNet for student of size 8.
}
\label{fig:loss}
\end{figure*}

\begin{align}
 O \left(
    \frac{|\mathcal{F}_t|_C}{n^{\alpha_{tr}}}
    + \frac{|\mathcal{F}_a|_C}{n^{\alpha_{at}}}
    + \frac{|\mathcal{F}_s|_C}{n^{\alpha_{sa}}}
    \right)  
    +\epsilon_{tr}
    +\epsilon_{at}
    +\epsilon_{sa} \label{eq:takd-1} & 
    \\ 
     \le
      O \left(
    \frac{|\mathcal{F}_t|_C}{n^{\alpha_{tr}}}
    + \frac{|\mathcal{F}_s|_C}{n^{\alpha_{st}}}
    \right) + \epsilon_{tr} + \epsilon_{st} &
    \label{eq:takd-2}
    \\
    \le 
    O \left(
    \frac{|\mathcal{F}_s|_C}{n^{\alpha_{sr}}} \right) + \epsilon_{sr} &.
    \label{eq:takd-3}
\end{align}
We now discuss how the first inequality  (eq. \eqref{eq:takd-1} $\le$ eq. \eqref{eq:takd-2}) holds which entails TAKD outperforms BLKD. To do so, first note that 
$\alpha_{st} \le \alpha_{sa}$ and $\alpha_{st} \le \alpha_{at}$ (the larger the gap means the lower rate of learning or smaller $\alpha_{\cdot \cdot}$). Figure~
\ref{fig:gaps}-left shows their differences. Student learning directly from teacher is certainly more difficult than either student learning from TA or TA learning from teacher. Therefore, asymptotically speaking, 
$ O \left( \frac{|\mathcal{F}_a|_C}{n^{\alpha_{at}}}
    + \frac{|\mathcal{F}_s|_C}{n^{\alpha_{sa}}}
    \right) \le
    O \left(
    \frac{|\mathcal{F}_s|_C}{n^{\alpha_{st}}}
    \right)
$ which in turn leads to
$O \left(
    \frac{|\mathcal{F}_t|_C}{n^{\alpha_{tr}}}
    + \frac{|\mathcal{F}_a|_C}{n^{\alpha_{at}}}
    + \frac{|\mathcal{F}_s|_C}{n^{\alpha_{sa}}}
    \right) \le O \left(
    \frac{|\mathcal{F}_t|_C}{n^{\alpha_{tr}}}
    + \frac{|\mathcal{F}_s|_C}{n^{\alpha_{st}}}
    \right) $.
    Moreover, according to assumption of~\citet{kd} we know $\epsilon_{at}
    +\epsilon_{sa} \le \epsilon_{st}$. These two together establish eq.~\eqref{eq:takd-1} $\le$ eq.~\eqref{eq:takd-2}, which means that the upper bound of error in TAKD is smaller than its upper bound in BLKD.
    
    Similarly, for the second inequality (eq.~\eqref{eq:takd-2} $\le$ eq.~\eqref{eq:takd-3}) one can use $\alpha_{sr} \le \alpha_{st}$ and  $\alpha_{sr} \le \alpha_{tr}$ and $\epsilon_{tr} + \epsilon_{st} \le \epsilon_{sr}$. Note that, these are asymptotic equations and hold when $n \to \infty$. In the finite sample regime, when $|\mathcal{F}_t|_C$ is very large, then the inequality 
    eq.~\eqref{eq:takd-2} $\le$ eq.~\eqref{eq:takd-3}
    may not be valid and BLKD fails. 
  Another failure case (in the finite sample regime) for BLKD happens when the student and teacher differ greatly in the capacity  (i.e. $\alpha_{st}$ is very small and close to $\alpha_{sr}$). In this case, the error due to transfer from real to teacher outweigh \eqref{eq:takd-2} in comparison to \eqref{eq:takd-3} and the inequality becomes invalid. In this case TAKD turns to be the key. By injecting a TA between student and teacher we break the very small $\alpha_{st}$ to two larger components $\alpha_{sa}$ and $\alpha_{at}$ which makes the second inequality (eq.~\eqref{eq:takd-1} $\le$
  eq.~\eqref{eq:takd-2}) a game changer for improving knowledge distillation.

\subsection{Empirical Analysis}
Whether or not a smooth (or sharp) loss landscape is related to the generalization error, is under an active debate in the general machine learning community~\citep{li2018visualizing}. 
However, for the case of knowledge distillation it seems to have connections to better accuracy. It's believed  that softened targets provide information on the similarity between output categories~\citep{kd}. 
\citet{furlanello2018born} connected the knowledge distillation to a weighted/smoothed loss over classification labels. Importantly,~\citet{deep-mutual} used posterior entropy and its flatness to make sense of the success of knowledge distillation. 
Supported by these prior works we propose to analyze the KD methods through loss landscape. In Figure~\ref{fig:loss}, using a recent state of the art landscape visualization technique~\citep{li2018visualizing} the loss surface of plain CNN on CIFAR-100 is plotted for student in three modes: (1) no knowledge distillation (NOKD), (2) baseline knowledge distillation (BLKD), (3) the proposed method (TAKD). It's seen that our network has a flatter surface around the local minima. This is related to robustness against noisy inputs which leads to better generalization.


\section{Summary}
\label{sec:conclusion}
We studied an under-explored yet important property in  Knowledge Distillation of neural networks. We showed that the gap between student and teacher networks is a key to the efficacy of knowledge distillation and the student network performance may decrease when the gap is larger.
 We proposed a framework based on Teacher Assistant knowledge Distillation to remedy this situation. We demonstrated the effectiveness of our approach in various scenarios and studied its properties both empirically and theoretically.
Designing a fully data-driven automated TA selection is an interesting venue for future work. We also would like to make a call for research on deriving tighter theoretical bounds and rigorous analysis for knowledge distillation.

\section*{Acknowledgement}
Authors SIM and HGH were supported in part  through  grant CNS-1750679 from the United States National Science Foundation. The authors would like to thank Luke Metz, Rohan Anil, Sepehr Sameni, Hooman Shahrokhi, Janardhan Rao Doppa, and Hung Bui for their feedback.

\bibliographystyle{aaai}
\bibliography{main}


\clearpage
\newpage
\appendix
\section{Network Architectures}
\label{app-networks}
In this section we explain the exact architecture of the models used in experiments. In order to have a concise representation, we use the following abbreviations for different layer types: 
\begin{itemize}
\itemsep 0pt
\item CB means a convolutional layer followed by a batch normalization.
\item MP repreesnts a maxpooling layer. 
\item FC stands for a fully connected layer. 
\end{itemize}
All the convolutional layers use $3\times3$ filters and maxpooling layers have stride of $2$ and kernel size of $3$. Finally, the number after each layer type, is the number of output channels if the layer is convolutional or output units if it is fully connected. For example, CB32 represents a convolutional layer with $32$ output channels followed by a batch normalization layer.
Networks used in CIFAR-10 and  CIFAR-100 experiments are described in Table \ref{table:c10models} and Table \ref{table:c100models}, respectively.

\begin{table}[h!]
\caption{Plain CNN architecture in CIFAR-10 experiments}
\label{table:c10models}
\begin{tabular}{|c|l|}
\hline
\# Conv. Layers & Architecture \\ \hline
2 & CB16, MP, CB16, MP, Ã¢ÂÂFC10Ã¢ÂÂ \\ \hline
4 & \begin{tabular}[c]{@{}l@{}}CB16, CB16, MP, CB32, CB32,\\  MP, FC10\end{tabular} \\ \hline
6 & \begin{tabular}[c]{@{}l@{}}CB16, CB16, MP, CB32, CB32,\\  MP, CB64, CB64, MP, FC10\end{tabular} \\ \hline
8 & \begin{tabular}[c]{@{}l@{}}CB16, CB16, MP, CB32, CB32,\\ MP, CB64, CB64, MP, CB128,\\ CB128, MP, FC64, FC10\end{tabular} \\ \hline
10 & \begin{tabular}[c]{@{}l@{}}CB32, CB32, MP, CB64, CB64,\\  MP, CB128, CB128, MP, CB256,\\  CB256, CB256, CB256, MP,\\ FC128, FC10\end{tabular} \\ \hline
\end{tabular}
\end{table}


\begin{table}[h!]
\caption{Plain CNN architecture in CIFAR-100 experiments}
\label{table:c100models}
\begin{tabular}{|c|l|}
\hline
\# Conv. Layers & Architecture \\ \hline
2 & CB32, MP, CB32, MP, FC100 \\ \hline
4 & \begin{tabular}[c]{@{}l@{}}CB32, CB32, MP, CB64, CB64,\\ MP, FC100\end{tabular} \\ \hline
6 & \begin{tabular}[c]{@{}l@{}}CB32,C B32, MP, CB64, CB64,\\ MP, CB128, CB128, FC100\end{tabular} \\ \hline
8 & \begin{tabular}[c]{@{}l@{}}CB32, CB32, MP, CB64, CB64,\\ MP, CB128,  CB128, MP, CB256,\\ CB256, MP, FC64, FC100
\end{tabular} \\ \hline
10 & \begin{tabular}[c]{@{}l@{}}CB32, CB32, MP, CB64, CB64, MP,\\ CB128, CB128, MP, CB256, CB256,\\ CB256, CB256, MP, FC512, FC100\end{tabular} \\ \hline
\end{tabular}
\end{table}

\begin{figure}[t!]
\centering
\begin{tabular}{cc}
\hspace{-3mm}
\includegraphics[width=0.51\linewidth]{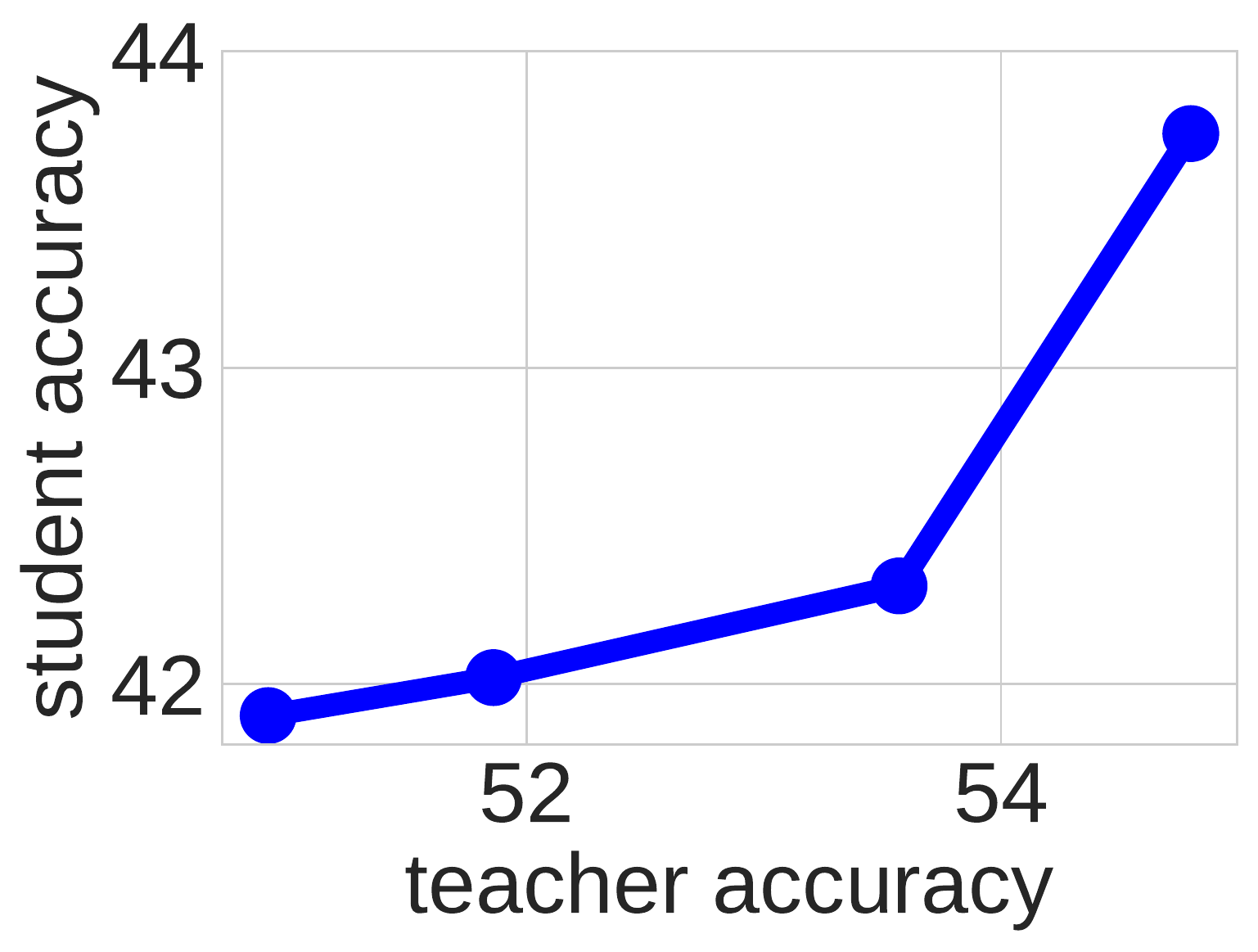} & \hspace{-4mm}
\includegraphics[width=0.49\linewidth]{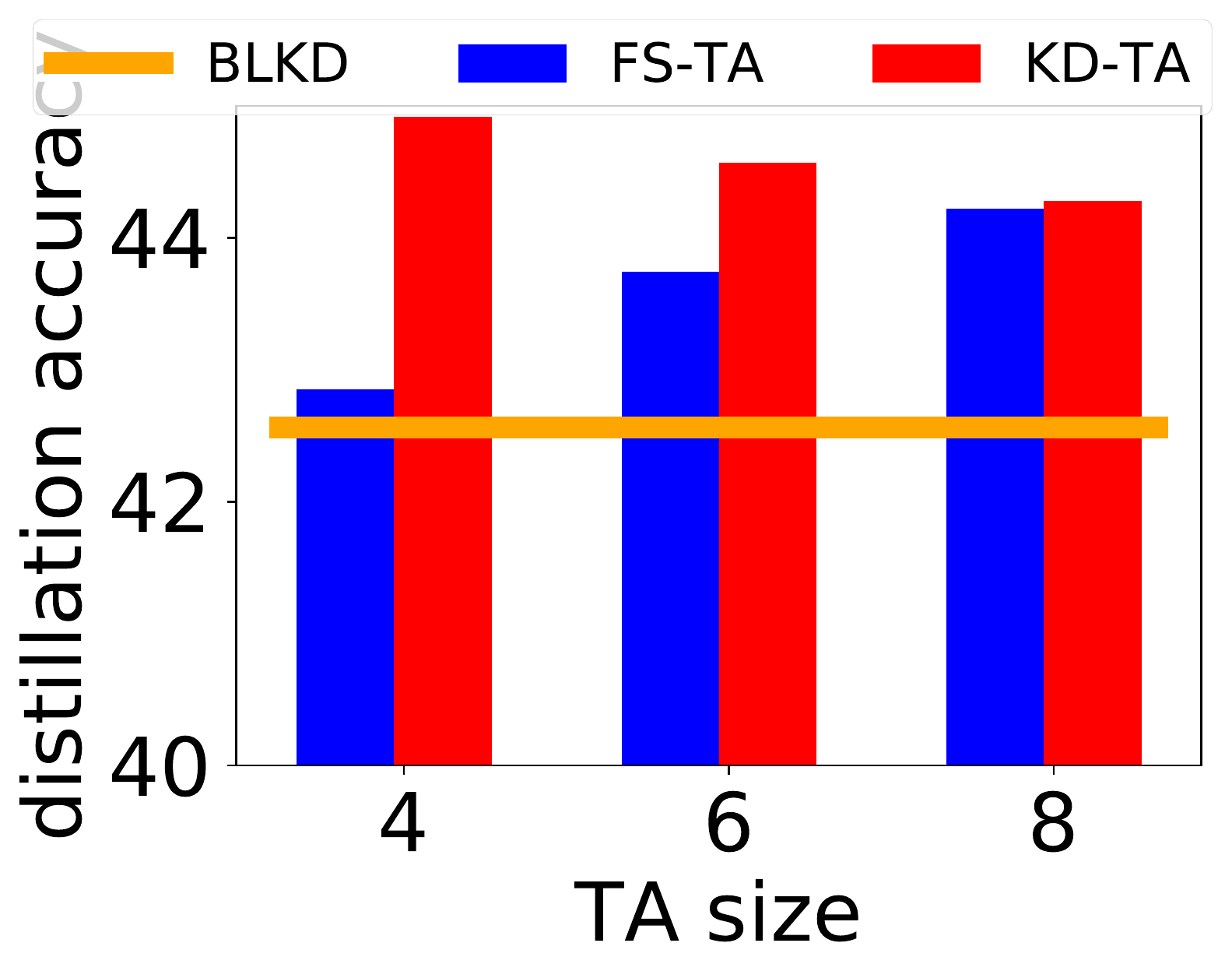} \\
(a) & (b)
\end{tabular}
\caption{ (a) Distillation performance vs. accuracy of the teacher (all teachers have the same number of convolutional layers); (b) Comparison of distillation performance via distilled TA and from-scratch TA.}
\label{fig:why-distil-ta}
\end{figure}

\section{Why Using a Distilled TA?}
\label{subsec-why-distil-ta}
One might wonder why the intermediate TA needs to be trained via distillation while training from scratch is an option too. We show in this section that the importance of the TA is not only related to its size, but also to the way it is trained and what accuracy it achieves. For brevity, we focus on plain CNN architecture and CIFAR-100 dataset.

We fix the network to have $6$ convolutional layers  and train it (without knowledge distillation) by varying the number of epochs within the set $\{5, 10, 15, 100\}$. This leads to 4 networks with increased performance that we will use as teachers in knowledge distillation while the student size is 2.
Figure~\ref{fig:why-distil-ta}-a shows the results.
It's seen that a better network (in terms of accuracy) is a better teacher. Moreover, we know that knowledge distillation usually leads to better networks compared to being trained from scratch and also justified theoretically. Therefore, it's justifiable to use distillation to train the intermediary networks (i.e., TAs).

Moreover, we perform an empirical investigation to validate that these distilled TAs indeed lead to better students. Figure~\ref{fig:why-distil-ta}-b shows the distillation performance with distilled TA (KD-TA) and with from scratch TAs (FS-TA) which are trained only from the data. It is seen that distilled TAs are more successful in training the students. Here the student size is 2 and the teacher size is 10.
Also, note that when TA size is larger (8 e.g.) the difference between FS-Ta and KD-TA is small. This might be due to still large size gap between TA (8) and student (2) that makes the distillation procedure less effective.


\section{The Best TA Sequence}
One of the trending challenges for machine learning is moving towards a real \emph{ automatic machine learning} (AutoML) where the algorithms run with minimum human supervision. Our proposed framework, Teacher Assistant based Knowledge Distillation (TAKD), can also be adapted and deployed to that by automatically finding the best TA sequence. Given a fixed expert neural network (T) and a small student neural network (S), when there is no resource or time constraint the best sequence to use T in training S is to sequentially use every TA possible in between in a decreasing order of capacity. To see why let's proceed with a more formal language. The aim of this part is mostly to encourage systematic study of knowledge distillation and to lay some language for further advances.

\begin{principle}[Knowledge Distillation]
\label{princ-kd}
Knowledge distillation improves the accuracy of the student network compared to student learning alone using classification loss only.
\end{principle}

\begin{principle}[Knowledge Distillation Performance]
\label{princ-distil}
Given teacher networks of the same capacity/complexity, the one with higher accuracy is a better teacher for a student in knowledge distillation framework.
\end{principle}

\begin{principle}[Teacher Assistant based Knowledge Distillation]
\label{princ-takd}
Introducing a teacher assistant between student and teacher improves over the baseline knowledge distillation. 
\end{principle}

The reason that we have not put the above statements as theorems or lemmas but principles is that, even though, there are a few papers (such as the current work or~\citep{lopez2015unifying}) which tried to build the initial steps towards a theoretical understanding of knowledge distillation, most of the works in this area are only empirically validated. 
For example, in our work, Principle~\ref{princ-distil} is verified in the previous section that among the teachers of the same size the one with better performance is seen to be a better trainer for the student. In this section, we take these principles for granted and don't argue their correctness and build the rest on their top.
A rigorous mathematical understanding of the mechanics of knowledge distillation and yet more extensive empirical validation remain as  future work.

\begin{lemma}
\label{lem:opt-seq}
The optimal sequence for TAKD with multiple TAs consists of all available intermediate TA Networks.
\end{lemma}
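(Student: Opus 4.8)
The plan is to prove the lemma by an \emph{insertion argument}: I will start from an arbitrary legal distillation path and show that whenever it omits an available intermediate network, that network can be spliced in without decreasing (indeed, under the strict forms of the principles, strictly increasing) the final student's accuracy; since there are only finitely many candidate networks, iterating this operation must terminate at the full path, which is therefore optimal. To set up, let the available architectures ordered by capacity be $T = N_0 \succ N_1 \succ \cdots \succ N_k \succ N_{k+1} = S$, and let a distillation path $P$ be a subsequence $N_0 = N_{i_0} \succ N_{i_1} \succ \cdots \succ N_{i_m} = S$ along which each network is trained by pure distillation from its immediate predecessor (and $N_0$ is trained from data). Write $\mathrm{acc}_P(N)$ for the accuracy a network on $P$ attains. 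The claim is that the path $P^\star$ containing all of $N_0,\ldots,N_{k+1}$ maximizes $\mathrm{acc}_P(S)$ over all such paths.

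Given any path $P \ne P^\star$, I would pick an omitted intermediate network $N_j$ and locate the edge $N_{i_\ell}\to N_{i_{\ell+1}}$ of $P$ that straddles it, i.e.\ $N_{i_\ell}\succ N_j \succ N_{i_{\ell+1}}$; such an edge exists because $P$ starts at $N_0$, ends at $N_{k+1}$, and is strictly decreasing in capacity. Form $P'$ by inserting $N_j$ into that edge. The prefix of $P'$ up through $N_{i_\ell}$ is identical to that of $P$, so $\mathrm{acc}_{P'}(N_{i_\ell}) = \mathrm{acc}_P(N_{i_\ell})$. Now apply Principle~\ref{princ-takd} to the three-network subproblem with teacher $N_{i_\ell}$, student $N_{i_{\ell+1}}$, and assistant $N_j$: training $N_{i_{\ell+1}}$ from $N_{i_\ell}$ \emph{through} $N_j$ beats training it directly, hence $\mathrm{acc}_{P'}(N_{i_{\ell+1}}) \ge \mathrm{acc}_P(N_{i_{\ell+1}})$.

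The remaining step is to propagate this single-node improvement down the rest of the chain. The node $N_{i_{\ell+1}}$ has the \emph{same architecture and capacity} in $P'$ and in $P$; only its accuracy has increased. Hence Principle~\ref{princ-distil} applies: as a teacher of fixed capacity, the higher-accuracy version of $N_{i_{\ell+1}}$ is a better teacher, so its successor's accuracy does not decrease, and by a downward induction along the suffix of the path every subsequent node — in particular $S$ — has accuracy in $P'$ at least as large as in $P$. Thus $P'$ is no worse than $P$ and uses one more intermediate network; repeating until no further network can be inserted yields $P^\star$, establishing optimality.

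I expect the main obstacle to be exactly this propagation step. It quietly requires reading out of Principle~\ref{princ-distil} the monotonicity statement "a strictly better teacher yields a (weakly) better student," requires that inserting a TA upstream leaves a network's capacity — and hence the hypothesis of Principle~\ref{princ-distil} — untouched, and requires that the principles compose cleanly along an arbitrarily long chain. I would flag in the proof that the argument is therefore only as rigorous as the (empirically supported) principles it rests on, that with weak inequalities one obtains "$P^\star$ is at least as good as any other path" — which is all optimality demands — and that a strict separation from every other path would additionally need Principles~\ref{princ-distil} and~\ref{princ-takd} in strict form. A parallel route, induction on the number of omitted networks with the same insertion move as the inductive step, would yield the same conclusion and could be noted as an alternative.
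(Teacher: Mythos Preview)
Your proposal is correct and follows essentially the same approach as the paper: both insert a missing intermediate network into a non-full path, invoke Principle~\ref{princ-takd} to improve the immediate successor, and then use Principle~\ref{princ-distil} to propagate that improvement down the remainder of the chain to $S$, concluding that the full path is optimal. The paper phrases this as a proof by contradiction (assume the optimal path omits some $q_i$, splice it in, derive a better path), while you phrase it as an iterative insertion terminating at $P^\star$; these are equivalent, and your treatment of the propagation step and the strict/weak inequality distinction is in fact more careful than the paper's.
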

\begin{proof}
This lemma can be verified by a simple proof by contradiction. Assume you order available networks by their capacity (\emph{aka} flexibility or size) as $\mathbb{Q} = (q_0, q_1, q_2, \ldots, q_n)$ where $T=q_0$ is teacher size and $S=q_n$ is the student size. If the optimal sequence among the possible $2^{n-1}$ ones, $\mathbb{C} = (c_0=T, c_1, c_2, \ldots, \ldots, c_{m-1}, c_m=S)$ is not equal to $\mathbb{Q}$, then get the first $i$ such that $q_i > c_i$. This means network $q_i$ is missing from $\mathbb{C}$. If you add $q_i$ between $c_{i-1}$ and $c_{i+1}$ as a TA according to principle~\ref{princ-takd} the network $c_{i+1}$ improves. 
Then, according to principle~\ref{princ-distil} a better $c_{i+1}$ leads to better $c_{i+2}$ and so on until $S$.
Not only $q_i$ should be added but according to principle~\ref{princ-kd} it's always better to train it using knowledge distillation.
In summary, by this replacement we will get a better sequence which contradicts $\mathbb{C}$ being optimal. Therefore, $\mathbb{Q}$ is an optimal distillation path.
\end{proof}
The above result is obtained as expected. But, now one could ask a more interesting question: \emph{What is the best sequence when the number of TAs is constrained?} It is of practical implication when there is a limit in resource or time. For example, what are the best two TAs to distill a vanilla CNN-10 to a CNN-2? What are the best length-3 TA path to transfer knowledge from ResNet-110 to ResNet-8? Given the exponential number of sequences it can be undesirable to do an exhaustive search as we did in subsection~\ref{subsec:multi-ta} and depicted in Figure~\ref{fig:multi-ta}.
Fortunately, we can show that this problem satisfies \emph{the principle of optimality} and an efficient \emph{dynamic programming} solution exists~\citep{bertsekas2005dynamic}.

\begin{lemma}
The problem of finding the best length-$k$ TA sequence to distill the knowledge from teacher to student has optimal substructure property~\citep{bertsekas2005dynamic}.
\end{lemma}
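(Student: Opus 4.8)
The plan is to formalize the length-$k$ sequence optimization with a value function and then establish the principle of optimality by a standard exchange (``cut-and-paste'') argument, leaning on Principles~\ref{princ-kd}--\ref{princ-takd}. First I would fix the pool of networks ordered by capacity $\mathbb{Q}=(q_0,\dots,q_n)$ with $q_0=T$ and $q_n=S$, and model a length-$k$ distillation path as a chain $T=c_0\to c_1\to\cdots\to c_k\to c_{k+1}=S$ of strictly decreasing capacity, where $c_1,\dots,c_k$ are the $k$ teacher assistants. The crucial modeling hypothesis, which I would state explicitly as the operative consequence of Principle~\ref{princ-distil}, is that the accuracy $A(c_{i+1})$ of a network obtained by distillation depends on the path only through (i) the architecture of $c_{i+1}$ itself and (ii) the accuracy $A(c_i)$ of its immediate teacher, via a function $A(c_{i+1})=g\big(A(c_i),c_{i+1}\big)$ that is monotonically nondecreasing in its first argument.

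Next I would define $V(c,j)$ to be the maximum accuracy attainable by network $c$ at the end of a valid length-$j$ strictly-decreasing path that starts from the pre-trained teacher $T$, with $V(T,0)$ equal to $T$'s pre-trained accuracy and infeasible configurations assigned value $-\infty$. The quantity sought is $V(S,k)$. The optimal-substructure claim is then: if $\mathbb{C}^\star=(T=c_0,c_1,\dots,c_k,c_{k+1}=S)$ is an optimal length-$k$ path, then its prefix $(c_0,\dots,c_k)$ is an optimal length-$(k-1)$ path from $T$ to the specific network $c_k$, i.e.\ it achieves $V(c_k,k-1)$.

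The proof of the claim is by contradiction. Suppose some other length-$(k-1)$ path from $T$ to $c_k$ yields accuracy $A'(c_k)>A(c_k)$. Since the tail edge $c_k\to S$ produces $A(S)=g\big(A(c_k),S\big)$ and $g$ is nondecreasing in its first argument, substituting the better prefix produces a length-$k$ path to $S$ of accuracy $g\big(A'(c_k),S\big)$; Principle~\ref{princ-distil}, applied to the two prefixes which end at the \emph{same} network $c_k$ (hence equal capacity) but with different teacher accuracies, then gives a \emph{strict} improvement, contradicting the optimality of $\mathbb{C}^\star$. Iterating this one-step cut down the chain yields the Bellman recursion $V(c,j)=\max\big\{\,g\big(V(c',j-1),c\big):\operatorname{cap}(c)<\operatorname{cap}(c')\le\operatorname{cap}(T)\,\big\}$, which is precisely the optimal-substructure / principle-of-optimality statement and immediately gives an $O(n^2 k)$ dynamic program.

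I expect the main obstacle to be not the exchange argument, which is routine once the setup is in place, but justifying the memorylessness assumption $A(c_{i+1})=g(A(c_i),c_{i+1})$: the accuracy of a distilled network may in reality depend on its entire ancestry (for instance on logit softness/calibration, which Factor~3 in the paper indicates drifts along the chain), not only on the immediate teacher's accuracy. I would therefore present this as the precise hypothesis under which Principles~\ref{princ-kd}--\ref{princ-takd} imply optimal substructure, and note that relaxing it to dependence on, say, both accuracy and a ``softness'' summary statistic of the immediate teacher merely enlarges the DP state while preserving the substructure, whereas genuine long-range dependence would break it.
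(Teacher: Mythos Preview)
Your proposal is correct and follows essentially the same cut-and-paste argument as the paper: take an optimal length-$k$ path, suppose its prefix to the penultimate network is not optimal among paths of length $k-1$ ending at that same network, swap in a better prefix, and invoke Principle~\ref{princ-distil} (same-capacity teacher, higher accuracy $\Rightarrow$ better student) to contradict optimality. Your write-up is more carefully formalized---value function $V(c,j)$, monotone transition $g$, explicit Bellman recursion---and, unlike the paper, it surfaces the memorylessness hypothesis on which the exchange step actually depends; the paper's proof is the same argument stated more tersely.
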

\begin{proof}
Assume that the optimal length-$k$ TA sequence from a network of size $T$ to a network of size $S$ is $\mathbb{Q} = (q_0, q_1, \ldots, q_{k-1}, q_k)$, where $q_0=T$ is the teacher size and $ q_k = S$ is the student size. We claim that the path $\mathbb{Q'} = (q_0, q_1, \ldots, q_{k-1})$ is the optimal sequence of length $k-1$ to distill a teacher of size $q_0$ to $q_{k-1}$, otherwise, if there is a better sequence $\mathbb{L}' = c_0, c_1, \ldots, c_{k-1}$ from $c_0 = q_0$ to $c_{k-1} = q_{k-1}$, then replace $\mathbb{Q}'$ by $\mathbb{L}'$ in $\mathbb{Q}$. According to principle~\ref{princ-distil} we should get a better $\mathbb{Q}$ which is a contradiction. Therefore, if $\mathbb{Q}$ is the optimal sequence of length $k$ for distilling into network $q_{k}$, then $\mathbb{Q}'$ must be the optimal length $k-1$ for network $q_{k-1}$.
\end{proof}
The above optimal substructure entails a dynamic programming solution for the problem.
Define $p_k(q_j)$ as the best path of length $k$ for distilling teacher network T to a student network of size $q_j$ with $k$ distillation steps via intermediate TAs. Let $w_k(q_j)$ be the associated optimal network of size $q_j$.
Then,
\begin{equation}
    w_k(q_j) = \arg\min_{w\in \mathbf W_{j,k}} \ell(w),
\end{equation}
where $\mathbf W_{j,k}=\{\text{BLKD}(w_{k-1}(q_i)\to q_j)~|~0<i<j\}$. BLKD$(w \to q)$ means performing a direct (baseline) knowledge distillation and returning the trained network. $\ell$ evaluates and return the loss function. The argmin operator then returns the most accurate network.
If $i^*$ is the minimizer then the path $p_k(q_j) = [p_{k-1}(q_{i^*}); q_j]$. One can easily see that the optimal path will be found performing $O(kn^2)$ knowledge distillation operations, thanks to the beauty of dynamic programming. This is an exponential speedup  compared to the exhaustive search among ${n-1} \choose{k-1}$ possible paths. See Algorithm~\ref{alg:path} for the procedure.

\begin{algorithm}[t]
   \caption{Optimal length $k$ distillation path from T to S}
   \label{alg:path}
\begin{algorithmic}
   \STATE {\bfseries Input:}  $(q_0=T, q_1, q_2, \ldots, q_n=S)$ in decreasing order of size, length $k$
  \STATE {\bfseries Output:}  Optimal distilled student $w_k(S)$ and optimal distillation path $p_k(S)$
   \STATE {\# Initialization}
   \FOR{$i=1$ {\bfseries to} $n$}
   \STATE $w_{1}(q_i) = \text{BLKD}(T \to q_i)$
   \ENDFOR
    \STATE { \# Main Loop}
    \FOR{$d=2$ {\bfseries to} $k$}
     \FOR{$j=1$ {\bfseries to} $n$}
     
     \STATE $i^*=\arg\min_{0<i<j} \ell(\text{BLKD}(w_{d-1}(q_i)\to q_j))$
     \STATE $w_d(q_j) = \text{BLKD}(w_{d-1}(q_{i^*})\to q_j)$
     \STATE 
     $p_d(q_j) = [p_{d-1}(q_{i^*}); q_j]$
    \ENDFOR
    \ENDFOR
    \STATE \textbf{return} $p_k(S)$;
\end{algorithmic}
\end{algorithm}

We can also validate the optimality substructure of the problem thorough our experiemnts, where, Figure~\ref{fig:multi-ta} shows all the paths for T=10 and S=2. As an instance, let's examine the best path with exactly 2 TAs in between. Among $\mathbb{Q}_1 = (10 \to 8 \to 6 \to 2)$, $\mathbb{Q}_2 = (10 \to 8 \to 4 \to 2)$, and $\mathbb{Q}_3 = (10 \to 6 \to 4 \to 2)$, $\mathbb{Q}_3$ is the best, so we expect $\mathbb{Q}_3' = (10 \to 6 \to 4)$ be the best path to $4$ with one intermediate TA, which is the case in Figure~\ref{fig:multi-ta}.



\end{document}